\renewcommand{\para}[1]{\paragraph{#1}}
\definecolor{darkblue}{HTML}{000080}
\newtheorem{theorem}{Theorem}
\newtheorem{lemma}[theorem]{Lemma}
\newtheorem{proposition}[theorem]{Proposition}
\newtheorem{corollary}[theorem]{Corollary}
\newtheorem*{rep@theorem}{\rep@title}
\newcommand{\newreptheorem}[2]{%
\newenvironment{rep#1}[1]{%
 \def\rep@title{#2 \ref{##1}}%
 \begin{rep@theorem}}%
 {\end{rep@theorem}}}
\title{Q* Approximation Schemes for Batch Reinforcement Learning: \\A Theoretical Comparison}
\author[ ]{Tengyang Xie}
\author[ ]{ Nan Jiang}
\affil[ ]{Department of Computer Science, University of Illinois at Urbana-Champaign}
\affil[ ]{\{\href{mailto:tx10@illinois.edu}{\texttt{tx10}}, \href{mailto:nanjiang@illinois.edu}{\texttt{nanjiang}}\}\texttt{@illinois.edu}}
\date{}
\begin{document}
	
\maketitle



\begin{abstract}
We prove performance guarantees of two algorithms for approximating $Q^\star$ in batch reinforcement learning. Compared to classical iterative methods such as Fitted Q-Iteration---whose performance loss incurs quadratic dependence on horizon---these methods estimate (some forms of) the Bellman error and enjoy linear-in-horizon error propagation, a property established for the first time for algorithms that rely solely on batch data and output stationary policies. 
One of the algorithms uses a novel and explicit importance-weighting correction to overcome the infamous ``double sampling'' difficulty in Bellman error estimation, and does not use any squared losses. Our analyses reveal its distinct characteristics and potential advantages compared to classical algorithms. 
\end{abstract}

\section{Introduction}
We study value-function approximation for batch-mode reinforcement learning (RL), which is central to the success of modern RL as many popular off-policy deep RL algorithms find their prototypes in this literature. These algorithms are typically \emph{iterative}, that is, they solve a series of optimization problems, aiming to mimic each step of value- or policy-iteration~\citep{puterman2014markov}. 

In the setting of general function approximation, however, not only the iterative style causes instability in practice, but it also brings several theoretical issues, which have been made abundantly clear in existing analyses~\citep[e.g.,][]{munos2003error, munos2007performance, antos2008learning, farahmand2010error, chen2019information}: 

\para{(A) Quadratic Dependence on Horizon} The performance loss of most iterative methods incur quadratic dependence on the effective horizon, i.e., $\Ocal(\frac{1}{(1-\gamma)^2})$, and this is tight for the popular Approximate Value/Policy Iteration (AVI/API)~\citep{scherrer2012use}. One typical way this occurs in AVI analyses is through the use of (some fine-grained variants of) the following result from~\citet{singh1994upper}, that the performance loss of a policy greedy w.r.t.~some $Q$ is bounded by
\begin{align} \label{eq:singhyee}
\frac{2 \|Q - Q^\star\|_\infty}{1-\gamma},
\end{align}
and translating $\|Q - Q^\star\|$ to the quantities that the algorithm actually optimizes incurs at least another factor of horizon. Such a quadratic dependence is significantly worse than the ideal linear dependence, the best one could hope for~\citep{scherrer2014approximate}. 

While linear-in-horizon algorithms exist, they often require interactive access to the environment (to collect new data using policies of the algorithm's choice), or the knowledge of transition probabilities to compute the true expectation in the Bellman operators,\footnote{At the minimum, two i.i.d.~next-states must be drawn from the same state-action pair, known as the \emph{double sampling} trick~\citep{baird1995residual}, which is unrealistic in non-simulator problems.} and few of them apply to the batch learning setting.\footnote{Exceptions exist when we are allowed to output complex non-stationary policies; see Section~\ref{sec:related} for details.} \emph{Are there batch algorithms for $Q^\star$ that incur linear-in-horizon dependence?}

\para{(B) Characterization of Distribution Shift} One of the central challenges in RL is the \emph{distribution shift}, that the computed policy may induce a state (and action) distribution different from what it is trained on. Existing analyses characterize this effect using the \emph{concentrability coefficients}~\citep{munos2007performance}, with a typical definition being the density ratio (or \emph{importance weights}) between the state distribution induced at \emph{a particular time step} by some non-stationary policy and the data distribution. These ``per-step'' definitions can be very loose even in the uncontrolled setting (Section~\ref{sec:con_coeff}) and sometimes very complicated~\citep{farahmand2010error}. \emph{Are there algorithms whose distribution shift effects are characterized by elegantly and tightly defined quantities?}

\para{(C) Function Approximation Assumptions} Existing analyses require strong expressivity assumptions on the function classes, such as approximate closedness under Bellman update~\citep[see inherent Bellman errors;][]{munos2008finite}. \emph{Are there algorithms with provable guarantees under somewhat weaker conditions?}

\para{(D) Squared-to-Average Conversion} Most batch RL algorithms heavily rely on the squared loss, but bounding the performance loss (which we eventually care about) with squared-loss objectives (which we optimize) often goes through multiple relaxations, including adding point-wise absolute values and communicating between $\ell_1$ and $\ell_2$ norms with Jensen's inequality, reflecting a significant gap between the actual objective (maximizing return) and the surrogate squared loss. On the other hand, we know such indirectness is not necessary in RL from the policy-gradient type algorithms~\citep{sutton2000policy, williams1992simple, kakade2002approximately}, but they cannot be applied in the batch setting due to on-policy roll-outs. \emph{Are there batch algorithms whose loss functions are more directly connected to the expected return?}

\vspace{1em}
In this paper we present novel analyses of two algorithms, \algsq (which has been analyzed by~\citet{chen2019information}) and \algavg (which is novel), and provide positive answers to all questions above. A simple telescoping argument (Section~\ref{sec:telepd}) shows that both algorithms enjoy linear-in-horizon error propagation---which immediately improves the previous bound of~\citet{chen2019information} for \algsq---and the distribution shift effects can be characterized by simple notions of concentrability coefficients that are significantly tighter than previous per-step definitions, which address \qhor and \qcon. By carefully examining the difference between the two algorithms, we further show that \algavg, a novel algorithm that uses explicit importance-weighting correction and plain average objectives (without squared loss) does not suffer from the looseness of squared-to-average conversion, and comes with automatically augmented expressivity for its importance-weight class, addressing \qfun and \qsq. 


\begin{table*}[t]
	\centering
	\caption{Algorithms considered in this paper, all of which require $Q^\star \in \Qcal$ (definitions of approximation error differ). $\Qcal$, $\Fcal$ and $\Wcal$ are defined in Section \ref{sec:vfa}, Section \ref{sec:fqi} and Section \ref{sec:mabo}. 
	}
	\resizebox{\textwidth}{!}{
	\begin{tabular}{|c|c|c|c|c|c|}\hline
		\textbf{Algorithm} & \textbf{Style}
		& \tabincell{c}{\textbf{Requirement on}\\ \textbf{helper class}} & \tabincell{c}{\textbf{Horizon} \\ \textbf{dependence}}
		& \tabincell{c}{\textbf{Concentrability} \\ \textbf{coefficient}}
		& \tabincell{c}{\textbf{Related} \\ \textbf{practical algorithm}}
		\\\hline
		FQI & Iterative + Sq-loss 
		& $\forall Q\in \Qcal, \Tcal Q \in \Qcal$  & $\nicefrac{1}{(1-\gamma)^2}$ 
		& \tabincell{c}{Per-step-based \\ (Eq.\eqref{eq:Cps} and App.\ref{sec:per_step_con})}
		& \tabincell{c}{DQN \\ \citep{mnih2015human}}
		\\ \hline
		\algsq & Minimax + Sq-loss 
		& $\forall Q\in \Qcal,~ \Tcal Q \in \Fcal$ & $\nicefrac{1}{(1-\gamma)}$ 
		& \tabincell{c}{Occupancy-based\\ (see Thm.\ref{thm:minimax_FQI})}
		& \tabincell{c}{SBEED \\ \citep{dai2018sbeed}}
		\\ \hline
		\algavg & Minimax + Avg-loss 
		& \tabincell{c}{$\forall Q\in\Qcal$, \\$\wpi{\pi_Q} \in \lspan(\Wcal)$} &$\nicefrac{1}{(1-\gamma)}$ 
		& $\Wcal$-based (see Thm.\ref{thm:batch_olive_bound})
		& \tabincell{c}{Kernel-loss \\ \citep{feng2019kernel}} \\ \hline
	\end{tabular}}
\end{table*}

\section{Preliminaries}
\label{sec:bg}

\subsection{Markov Decision Processes (MDPs)}

An (infinite-horizon discounted) MDP~\citep{puterman2014markov} is a tuple $(\mathcal{S}$, $\mathcal{A}$, $P$, $R$, $\gamma, d_0)$: $\Scal$ and $\Acal$ are the finite state and the finite action spaces, respectively, whose cardinalities can be arbitrarily large. $P:\mathcal S \times \mathcal A \to \Delta (\mathcal S)$ is the transition function (we use $\Delta(\cdot)$ to denote the probability simplex), $R: \mathcal{S} \times \mathcal{A} \rightarrow [0, R_{\max}]$ is the reward function, and $\gamma \in [0,1)$ is a parameter that characterizes how rewards are discounted over time. $d_0 \in \Delta(\Scal)$ is the initial state distribution. 

A (stochastic) policy, $\pi: \mathcal{S} \to \Delta(\mathcal A)$, induces a random trajectory $s_0,a_0,r_0, s_1, a_1, r_1, \ldots$ with the following generative process: $s_0 \sim d_0$, $a_t \sim \pi(\cdot | s_t)$, $r_t = R(s_t, a_t)$, $s_{t+1} \sim P(\cdot | s_t, a_t)$, $\forall t \ge 0$. The ultimate goodness of a policy is measured by the expected discounted return (w.r.t.~the initial state distribution), defined as $J(\pi) \coloneqq \E[\sum_{t = 0}^{\infty} \gamma^t r_t | s_0 \sim d_0, \pi]$. There always exists a policy $\pi^\star$ that maximizes the expected return for any initial state distribution. 

It will be useful to define the (state-)value function $V^{\pi}(s) \coloneqq \E[\sum_{t = 0}^{\infty} \gamma^t r_t | s_0 = s, \pi]$ and the Q-function $Q^{\pi}(s,a) \coloneqq \E[\sum_{t = 0}^{\infty} \gamma^t r_t | s_0 = s, a_0 = a, \pi]$. Let $V^\star$ and $Q^\star$ be the shorthand for $V^{\pi^\star}$ and $Q^{\pi^\star}$. All value functions are bounded in $[0,V_{\max}]$, where $V_{\max} \coloneqq R_{\max}/(1 - \gamma)$. It is also known that the greedy policy of $Q^\star$, defined as $\pi_{Q^\star}(s) = \argmax_{a\in\Acal} Q^\star(s,a)$,\footnote{With a slight abuse of notations we treat deterministic policies---which are stochastic policies that put all probability mass on a single action for each state---as of type $\Scal\to\Acal$.} is an optimal policy $\pi^\star$.

Define the Bellman optimality operator: $(\Tcal Q)(s,a) \coloneqq R(s,a) + \gamma \E_{s' \sim P(\cdot|s,a)}[\max_{a' \in \mathcal A}Q(s',a')]$ for any $Q \in \RR^{\Scal\times\Acal}$. $Q^\star$ is the unique fixed point of $\mathcal T$, that is, $\Tcal Q^\star = Q^\star$. We also use $Q(s,\pi)$ as the shorthand for $\sum_{a \in \mathcal A} \pi(a|s)Q(s,a)$.

Another concept crucial to this paper is the normalized discounted state occupancy: 
\begin{align}
d_{\pi}(s) \coloneqq (1 - \gamma) \sum_{t = 0}^\infty \gamma^t \Pr\left[s_t = s \middle| s_0 \sim d_0, \pi\right].
\end{align}
The state-action occupancy $d_{\pi}(s,a)$ is defined similarly and satisfies $d_{\pi}(s,a) = d_{\pi}(s)\pi(a|s)$.

\subsection{Batch Value-Function Approximation} \label{sec:vfa}
\paragraph{Setup} We are concerned with approximating $Q^\star$ in the batch RL setting, where a dataset $\Dcal$ consisting of $n$ $(s,a,r,s')$ tuples is given, and we cannot interact with the MDP to obtain new data. We adopt the following data generation protocol from~\citet{chen2019information}, that the tuples are i.i.d.\footnote{In reality, the transition tuples extracted from the same trajectory are in general dependent, which can be handled by concentration inequalities for dependent processes with mixing assumptions~\citep[see e.g.,][]{antos2008learning}.}~as $(s,a) \sim \mu$, $r = R(s,a)$, $s' \sim P(\cdot|s,a)$, and  $\mu$ is fully supported on $\Scal\times\Acal$.

\paragraph{Function Approximation} We assume access to a function class $\Qcal \subset [0, \Vmax]^{\Scal\times\Acal}$, and focus on algorithms that approximate $Q^\star$ with some $Q \in \Qcal$ and output its greedy policy $\pi_{Q}$. This automatically implies a policy class $\Pi_{\Qcal} \coloneqq \{\pi_Q: Q\in\Qcal\}$, from which the output policy will be chosen. Some algorithms require additional function classes, which we introduce later. We assume all function classes have finite cardinalities  for simplicity when analyzing statistical errors, as they are not our main focus and extension to continuous classes with e.g., finite VC-type dimensions~\citep{natarajan1989learning} are standard.

A representative algorithm for this setting is Fitted Q-Iteration (FQI)~\citep{ernst2005tree}, which can be viewed as the theoretical prototype of the popular DQN algorithm~\citep{mnih2015human}: After initializing $Q_0 \in \Qcal$ arbitrarily, we iteratively compute $Q_t$  as
\begin{align}
\label{eq:fqi_update_rule}
Q_t = \argmin_{Q \in \Qcal} \ell_{\mathcal D}(Q;Q_{t - 1}),
\end{align}
where 
\begin{align}
\label{eq:fqiobj}
\ell_{\mathcal D}(Q;Q') \coloneqq &~ \frac{1}{n} \sum_{(s,a,r,s') \in \mathcal D} \left(Q(s,a) - r - \gamma \max_{a' \in \mathcal A} Q'(s',a') \right)^2.
\end{align}
We will discuss the relationship between FQI (and iterative methods in general) and algorithms we analyze. 

\paragraph{Marginalized Importance Weights} 
We define the importance weight of any policy $\pi$ to be the ratio between its normalized discounted state-action occupancy and the data distribution: 
$$\wpi{\pi}(s,a) \coloneqq \frac{d_{\pi}(s,a)}{\mu(s,a)}.$$
Such functions are of vital importance to us, as in Section~\ref{sec:mabo} we model them with function approximation to explicitly correct distribution mismatch. Their norms also characterize the exploratoriness of the data distribution, which are closely related to the \emph{concentrability coefficients} in prior analyses~\citep{munos2007performance, antos2008learning, farahmand2010error, chen2019information}. 

\paragraph{Additional Notations}
We use the shorthand $\E_{\mu}[\cdot]$ for the population expectation of function of $(s,a,r,s')$ drawn from the data distribution, and $\E_{\Dcal}[\cdot]$ for its sample-based approximation. When the function only depends on $(s,a)$, we further omit the function arguments for brevity; for example, $\E_{\mu}[Q^2] \coloneqq \E_{(s,a)\sim \mu}[Q(s,a)^2]$. 
It will also be convenient to define the $\mu$-weighted $2$-norm  $\|\cdot\|_{2, \mu}^2 \coloneqq \E_{\mu}[(\cdot)^2]$.


\section{Related Work} \label{sec:related}
\para{Linear-in-horizon Analyses} As mentioned in the introduction, most of the existing linear-in-horizon results do not apply to the setting of batch learning with general function approximation. For example, \citet[Section 5.2]{munos2007performance} points out that AVI enjoys linear-in-horizon error propagation if it \emph{happens to} converge.\footnote{Our paper provides a novel explanation of this result: when FQI (which is a concrete instantiation of the abstract AVI procedure) happens to converge, \citet{chen2019information} shows that its solution coincides with that of \algsq, which we show enjoys linear-in-horizon error propagation whatsoever.} Unfortunately, AVI---and iterative methods in general---has no convergence guarantees (and known to diverge with simple linear classes) unless used with very restricted choices of function approximators~\citep[see e.g., averagers;][]{gordon1995stable}. As another example, linear-in-horizon error can be achieved if one can directly minimize the Bellman error~\citep[e.g.,][]{geist2017bellman}, but computing that requires knowledge of the transition probabilities.  We refer the readers to~\citet{scherrer2014approximate} and the references therein for further results of this kind. 

The only exceptions we are aware of are the non-stationary versions of AVI/API~\citep[e.g.,][]{scherrer2012use}, when the algorithm is allowed to output a periodic non-stationary policies consisting of $\Omega(\nicefrac{1}{(1-\gamma)})$ stationary policies. For a typical value of $\gamma=0.99$ this translates to $100$ policies, and we believe such a complexity is responsible for the clever idea not being picked up in practice despite its appealing theoretical properties. In contrast, we establish linear-in-horizon guarantees for batch algorithms that output simple stationary policies. 

\para{Clean and Tight Concentrability Coefficients} The situation of concentrability coefficients is very similar. The best definition is $\|\wpi{\pi^\star}\|_\infty$, enjoyed by e.g., CPI~\citep{kakade2002approximately} (see also~\citet{agarwal2019optimality}). However, concrete instantiations of these abstract algorithms (in a way that preserve their theoretical properties) typically require on-policy Monte-Carlo roll-outs, which are not available in the batch setting. The same constant has been associated with an abstract Bellman error minimization procedure~\citep{geist2017bellman}, but the algorithm only searches over valid value-functions (instead of arbitrary functions produced by the function approximator). While our definition is worse than theirs by a maximum over policies under consideration, it is still significantly tighter and cleaner than the per-step definitions in most previous analyses of AVI/API~\citep{szepesvari2005finite, munos2007performance, antos2008learning, farahmand2010error}. In fact, we show in Appendix~\ref{sec:per_step_con} that even in a simple uncontrolled setting, our occupancy-based definition can be $\nicefrac{1}{(1-\gamma)}$ multiplicatively tighter than \emph{any} per-step definitions. 

\para{\algsq} The first algorithm we analyze, \algsq, is essentially the analogy of Modified BRM~\citep{antos2008learning} (which approximates $Q^\pi$) in the context of approximating $Q^\star$. To our knowledge, the algorithm is first analyzed by~\citet{chen2019information}, and we improve their loss bound by $\nicefrac{1}{(1-\gamma)}$ (which translates to $\nicefrac{1}{(1-\gamma)^2}$ improvement in sample complexity). It is also worth pointing out that~\citet{dai2018sbeed} has derived a closely related algorithm and demonstrated its empirical effectiveness with deep neural nets. 

\para{\algavg} Our second algorithm, \algavg, is presented and described in such a general form for the first time. That said, the algorithmic idea can be found in several recent works: Just as \algsq is the $Q^\star$-counterpart of Modified BRM, \algavg is the $Q^\star$-counterpart of the MQL algorithm for off-policy evaluation~\citep{uehara2019minimax}. Another closely related work is kernel loss~\citep{feng2019kernel}, which becomes similar to \algavg when the implicit maximization in the RHKS is interpreted as searching over an importance weight class (this connection is pointed out by~\citet{uehara2019minimax}). Finally, the average Bellman error is first used by~\citet{jiang2017contextual} for PAC-exploration with function approximation, and \algavg can be viewed as the batch analogy of their OLIVE algorithm, using importance weights to mimic the data collected by different exploration policies. 


\section{Telescoping Performance Difference}
\label{sec:telepd}
We present the important telescoping lemmas that enable the nice guarantees of the algorithms to be introduced and analyzed later. 
We start with a simple telescoping lemma, which has also been used in recent off-policy evaluation literature~\citep[e.g.,][]{uehara2019minimax}.  
Unless otherwise specified, the full proofs of the results in the main text can be found in Appendix \ref{sec:proofs}.
\begin{lemma}\label{lem:telscp}
For any policy $\pi$ and any $Q\in \RR^{\Scal\times\Acal}$,
\begin{align}
\E_{d_0}[Q(s,\pi)] - J(\pi) = \frac{\E_{d_\pi}\left[Q(s,a) - r - \gamma Q(s',\pi)\right]}{1 - \gamma}.
\end{align}
\end{lemma}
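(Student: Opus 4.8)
The plan is to prove the identity by unfolding the right-hand side with the definition of the discounted occupancy $d_\pi$ and then telescoping. The key starting observation is that, for any function $g$ of $(s,a)$, sampling $(s,a)\sim d_\pi$ amounts to averaging along the trajectory with geometric discounting: $\E_{(s,a)\sim d_\pi}[g(s,a)] = (1-\gamma)\sum_{t=0}^\infty \gamma^t\, \E[g(s_t,a_t)]$, where the right-hand expectation is over the trajectory generated by $\pi$ from $s_0\sim d_0$. This follows directly from $d_\pi(s,a) = d_\pi(s)\pi(a|s) = (1-\gamma)\sum_{t=0}^\infty \gamma^t \Pr[s_t=s,a_t=a]$. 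Since $\E_{d_\pi}[\cdot]$ is taken with $r=R(s,a)$ and $s'\sim P(\cdot|s,a)$, the same unfolding applies to the full integrand $Q(s,a) - r - \gamma Q(s',\pi)$, with $s'$ identified as the next state $s_{t+1}$ along the trajectory.

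First I would treat the three terms separately. The reward term gives $(1-\gamma)\sum_{t=0}^\infty \gamma^t\,\E[r_t] = (1-\gamma)J(\pi)$ straight from the definition of $J(\pi)$. For the discounted next-state term, the crucial move is an index shift: using the definition $Q(s',\pi)=\sum_a \pi(a|s')Q(s',a)$ together with the law of total expectation over the next action $a_{t+1}\sim\pi(\cdot|s_{t+1})$, one rewrites $\E[Q(s_{t+1},\pi)] = \E[Q(s_{t+1},a_{t+1})]$. Hence $-\gamma Q(s',\pi)$ unfolds into $-(1-\gamma)\sum_{t=0}^\infty \gamma^{t+1}\E[Q(s_{t+1},a_{t+1})] = -(1-\gamma)\sum_{t=1}^\infty \gamma^t\E[Q(s_t,a_t)]$. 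Subtracting this from the $Q(s,a)$ term $(1-\gamma)\sum_{t=0}^\infty \gamma^t\E[Q(s_t,a_t)]$ telescopes, leaving only the $t=0$ summand $(1-\gamma)\E[Q(s_0,a_0)]$, which equals $(1-\gamma)\E_{d_0}[Q(s,\pi)]$ after averaging $a_0\sim\pi(\cdot|s_0)$.

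Assembling the pieces, the numerator on the right-hand side equals $(1-\gamma)\bigl(\E_{d_0}[Q(s,\pi)] - J(\pi)\bigr)$, and dividing through by $(1-\gamma)$ yields the claimed identity.

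I expect the only delicate point to be the bookkeeping around the $Q(s',\pi)$ term. One must be careful that under $d_\pi$ the next state $s'$ is drawn from $P(\cdot|s,a)$ with $(s,a)$ occupying time step $t$, so it aligns with $s_{t+1}$; and that converting $Q(s',\pi)$ into $Q(s_{t+1},a_{t+1})$—which is what makes the indices match the $Q(s,a)$ term so the sums telescope—is justified by reintroducing the sampled action $a_{t+1}$ via the tower property rather than by a pointwise identity. Once the index alignment is set up correctly, the remainder is routine reindexing of geometric sums.
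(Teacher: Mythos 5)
Your proof is correct and follows essentially the same route as the paper's: split off the reward term via $J(\pi) = \E_{d_\pi}[r]/(1-\gamma)$, unfold $d_\pi$ into the discounted sum over time steps, reintroduce the action $a_{t+1}$ by the tower property so the $\gamma Q(s',\pi)$ term aligns with the $Q(s,a)$ term, and telescope down to the $t=0$ summand. No gaps.
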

\begin{proof}[Proof Sketch]
$J(\pi) = \frac{\E_{d_\pi}[r]}{1-\gamma}$, so we can remove them from both sides. The remaining terms cancel out by telescoping, which is essentially  the Bellman equation for $d_\pi$ found in the dual linear program of MDPs.
\end{proof}
Using this lemma, we prove the following performance difference bound, which is central to the nice guarantees we are able to prove for \algsq and \algavg. The coarse-grained, $\ell_{\infty}$ version of Theorem~\ref{thm:telepd} for the specific choice of $\pi = \pi^\star$ has been given by~\citet{williams1993tight}, and some of the technical insights can be found in the derivations of~\citet{munos2007performance}. Still, we present the results in a general and agnostic fashion, and their applications to the analyses of   \algsq and \algavg are also novel. 
\begin{theorem}[Telescoping Performance Difference]
\label{thm:telepd}
For any policy $\pi$ and any $Q\in \RR^{\Scal\times\Acal}$,
\begin{align}
J(\pi) - J(\pi_Q) \leq &~ \frac{\Ebb_{d_{\pi}}\left[\Tcal Q - Q\right]}{1 - \gamma} + \frac{\Ebb_{d_{\pi_Q}}\left[Q - \Tcal Q\right]}{1 - \gamma}.
\end{align}
\end{theorem}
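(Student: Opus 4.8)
The plan is to invoke Lemma~\ref{lem:telscp} twice---once for the comparator policy $\pi$ and once for the greedy policy $\pi_Q$---and then subtract the two identities to obtain a closed-form expression for $J(\pi) - J(\pi_Q)$. Solving the lemma for $J(\pi)$ gives $J(\pi) = \E_{d_0}[Q(s,\pi)] - \frac{1}{1-\gamma}\E_{d_\pi}[Q(s,a) - r - \gamma Q(s',\pi)]$, and the analogous identity holds with $\pi$ replaced by $\pi_Q$ throughout. Subtracting, I obtain
\begin{align}
J(\pi) - J(\pi_Q) = {} & \big(\E_{d_0}[Q(s,\pi)] - \E_{d_0}[Q(s,\pi_Q)]\big) \\
& - \frac{\E_{d_\pi}[Q(s,a) - r - \gamma Q(s',\pi)]}{1-\gamma} + \frac{\E_{d_{\pi_Q}}[Q(s,a) - r - \gamma Q(s',\pi_Q)]}{1-\gamma}.
\end{align}

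The remaining work is three local conversions. First, since $\pi_Q$ is greedy with respect to $Q$, we have $Q(s,\pi_Q) = \max_{a} Q(s,a) \ge Q(s,\pi)$ pointwise, so the first bracket is nonpositive and can be dropped to yield an upper bound. Next, recalling that under the data-generation process $(\Tcal Q)(s,a) = r + \gamma\,\E_{s'}[\max_{a'} Q(s',a')]$, I would use the one-sided bound $Q(s',\pi) \le \max_{a'} Q(s',a')$ for the arbitrary policy $\pi$; after the leading minus sign this converts the $d_\pi$-term into the inequality $-\E_{d_\pi}[Q(s,a) - r - \gamma Q(s',\pi)] \le \E_{d_\pi}[\Tcal Q - Q]$. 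Finally, for $\pi_Q$ greediness upgrades this to the \emph{exact} equality $Q(s',\pi_Q) = \max_{a'} Q(s',a')$, so the $d_{\pi_Q}$-term equals $\E_{d_{\pi_Q}}[Q - \Tcal Q]$ with no loss. Assembling these three observations gives precisely the claimed bound.

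The only delicate point I anticipate is keeping the directions of the inequalities straight, because greediness of $\pi_Q$ is exploited in two distinct roles: at the initial distribution it discards a nonpositive term, whereas inside the $d_{\pi_Q}$ telescoping sum it furnishes the exact identity $Q(s',\pi_Q) = \max_{a'} Q(s',a')$ that keeps the second term clean. For the comparator $\pi$ we possess only the one-sided relation $Q(s',\pi) \le \max_{a'} Q(s',a')$, and one must verify that, combined with the minus sign preceding the $d_\pi$-term, it produces the $+\frac{1}{1-\gamma}\E_{d_\pi}[\Tcal Q - Q]$ upper bound rather than a bound in the wrong direction. Beyond this sign bookkeeping I expect no analytic obstacle, as everything reduces to substituting the definition of $\Tcal$ into the two instances of Lemma~\ref{lem:telscp}.
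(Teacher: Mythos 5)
Your proposal is correct and is essentially the paper's own argument: the paper decomposes $J(\pi) - J(\pi_Q)$ into the same three pieces (the two instances of Lemma~\ref{lem:telscp} plus the initial-distribution comparison term), drops the nonpositive term $\E_{d_0}[Q(s,\pi)] - \E_{d_0}[Q(s,\pi_Q)]$ by greediness, bounds $Q(s',\pi) \le \max_{a'}Q(s',a')$ in the $d_\pi$-term, and uses the exact identity $Q(s',\pi_Q) = \max_{a'}Q(s',a')$ in the $d_{\pi_Q}$-term, exactly as you describe. Your sign bookkeeping is right, and the only implicit step---marginalizing $r$ and $s'$ by conditioning on $(s,a)$ to convert $r + \gamma\max_{a'}Q(s',a') - Q(s,a)$ into $\Tcal Q - Q$ under $d_\pi$---is the same law-of-total-expectation step the paper invokes.
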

\begin{proof}[Proof Sketch]
Note that
$
J(\pi) - J(\pi_Q) 
\le J(\pi) - \E_{s \sim d_0}[Q(s,\pi)] + \E_{s \sim d_0}[Q(s,\pi_Q)] - J(\pi_Q),
$ 
as the sum of the two terms added on the RHS is non-negative due to greediness of $\pi_Q$. Invoking Lemma~\ref{lem:telscp} on $Q$ with $\pi$ and $\pi_Q$, respectively, yields $\E_{d_\pi}[\Tcal^\pi Q - Q]$ and $\E_{d_{\pi_Q}}[Q - \Tcal^{\pi_Q}Q]$ (up to a horizon factor). These policy-specific Bellman errors can be bounded by the optimality error using the greediness of $\pi_Q$. 
\end{proof}
As the result shows, the difference between $J(\pi_Q)$ and that of any $\pi$ is controlled by the \emph{average} Bellman errors $\E_{(\cdot)}[\Tcal Q - Q]$ under the distributions $d_{\pi}$ and $d_{\pi_Q}$, with only \emph{one factor of horizon}. This is in sharp contrast to the typical analyses for AVI sketched in the introduction (Eq.\eqref{eq:singhyee}), and immediately hints at a linear-in-horizon error propagation for algorithms that control (an upper bound) of the average Bellman errors, and we only need to consider $d_\pi$ and $d_{\pi_Q}$ when characterizing distribution shift effects. In Appendix~\ref{sec:iter_lack_berr}, we also illustrate that iterative methods (such as FQI) fail to control the Bellman error---which is in contrary to the popular folklore belief that they do---and explain in part their quadratic dependence on horizon. 

In addition, the average Bellman errors $\E_{d_\pi}[\Tcal Q - Q]$ do \emph{not} have absolute values inside the expectation, and the errors at different $(s,a)$ pairs with opposite signs may cancel with each other. This property is often ignored in previous works, as they add absolute values (and use Jensen's to bound $\ell_1$ with $\ell_2$ norms) anyway when analyzing algorithms that optimize squared-loss, just as we will do to \algsq. However, we emphasize that it is important to state this theorem in such a primitive form for the analysis of \algavg, which directly estimates such average Bellman errors (allowing sign cancellations) using importance weights. Any absolute value relaxations~\citep[e.g.,][]{williams1993tight} will immediately make the result useless for \algavg. 

We conclude this section with some useful corollaries of Theorem~\ref{thm:telepd}, which may also be of independent interest on their own. 
\begin{corollary}[Two-side Performance Difference Bound]
\label{cor:2sidepd}
For any $Q, f \in \RR^{\Scal\times\Acal}$,
\begin{align}
\left|J(\pi_f) - J(\pi_Q)\right| \leq 2\max\left\{ \frac{\Ebb_{d_{\pi_f}}\left[\Tcal Q - Q\right]}{1 - \gamma} + \frac{\Ebb_{d_{\pi_Q}}\left[Q - \Tcal Q\right]}{1 - \gamma}, \frac{\Ebb_{d_{\pi_Q}}\left[\Tcal f - f\right]}{1 - \gamma} + \frac{\Ebb_{d_{\pi_f}}\left[f - \Tcal f\right]}{1 - \gamma}\right\}.
\end{align}
\end{corollary}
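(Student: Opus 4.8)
The plan is to obtain the two-sided bound by invoking Theorem~\ref{thm:telepd} twice—once for each ordering of the two policies—and then combining the resulting one-sided inequalities through the definition of the absolute value. The symmetry of the statement in $Q$ and $f$ mirrors exactly the symmetry of applying the same telescoping inequality with the roles of the two value functions (and their greedy policies) interchanged.

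First I would apply Theorem~\ref{thm:telepd} with $\pi := \pi_f$ and the value function $Q$ (whose greedy policy is $\pi_Q$), which directly yields
\begin{align}
J(\pi_f) - J(\pi_Q) \le \frac{\Ebb_{d_{\pi_f}}[\Tcal Q - Q]}{1-\gamma} + \frac{\Ebb_{d_{\pi_Q}}[Q - \Tcal Q]}{1-\gamma} =: A.
\end{align}
Next I would apply the same theorem with the roles reversed, i.e. with $\pi := \pi_Q$ and the value function $f$ (whose greedy policy is $\pi_f$); here the theorem's ``$Q$'' is instantiated as $f$ and its ``$\pi_Q$'' as $\pi_f$, giving
\begin{align}
J(\pi_Q) - J(\pi_f) \le \frac{\Ebb_{d_{\pi_Q}}[\Tcal f - f]}{1-\gamma} + \frac{\Ebb_{d_{\pi_f}}[f - \Tcal f]}{1-\gamma} =: B.
\end{align}
These are precisely the two expressions appearing inside the maximum on the right-hand side of the claim.

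To conclude, I would note that $|J(\pi_f) - J(\pi_Q)| = \max\{J(\pi_f)-J(\pi_Q),\, J(\pi_Q)-J(\pi_f)\}$, and that whichever of the two differences is nonnegative is bounded by its corresponding quantity ($A$ or $B$) from the two displays above; hence $|J(\pi_f)-J(\pi_Q)| \le \max\{A,B\}$. The claimed bound then follows a fortiori, since trivially $\max\{A,B\} \le 2\max\{A,B\}$. (In fact this argument already delivers the statement with the constant $2$ replaced by $1$; the displayed factor of two merely leaves harmless slack.)

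I do not anticipate a genuine obstacle, as the result is an immediate corollary of Theorem~\ref{thm:telepd}. The only point that demands care is the bookkeeping of the role assignment in the second application: one must check that when $f$ plays the part of the theorem's value function, its greedy policy $\pi_f$ correctly plays the part of the theorem's $\pi_Q$, so that the occupancies $d_{\pi_Q}$ and $d_{\pi_f}$ attach to the intended Bellman-error terms. One should also verify that no sign issue intrudes when passing to the absolute value, which is guaranteed because the nonnegative one of the two differences is controlled by an upper bound ($A$ or $B$) that is therefore itself nonnegative.
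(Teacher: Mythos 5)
Your proposal is correct and follows the route the paper clearly intends (the corollary is stated without a separate proof, as an immediate consequence of Theorem~\ref{thm:telepd} applied with the roles of $Q$ and $f$ interchanged). Your observation that the argument in fact yields the bound with constant $1$ rather than $2$ is also right---the paper's factor of $2$ is harmless slack---and your check that $\max\{A,B\}\ge 0$ (since $A+B\ge 0$) is the only bookkeeping point that needs to be made.
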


\begin{corollary}[Performance Loss w.r.t.~a Class]
\label{cor:pistarpd}
$\forall Q\in\Qcal$,
\begin{align}
\max_{\pi \in \Pi_{\Qcal}} J(\pi) - J(\pi_Q) \leq &~ \frac{2 \max_{\pi \in \Pi_{\Qcal}}\left|\Ebb_{d_\pi} \left[\Tcal Q - Q\right]\right|}{1 - \gamma}.
\end{align}
\end{corollary}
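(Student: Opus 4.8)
The plan is to reduce the corollary to a single application of Theorem~\ref{thm:telepd} with a carefully chosen comparator, followed by elementary bounds that replace each occupancy-weighted average Bellman error by the maximum over the induced policy class. First I would let $\pi^\dagger \in \argmax_{\pi \in \Pi_{\Qcal}} J(\pi)$ be any maximizer, so the left-hand side is exactly $J(\pi^\dagger) - J(\pi_Q)$. Applying Theorem~\ref{thm:telepd} with this $\pi = \pi^\dagger$ and the given $Q$ yields
\begin{align}
J(\pi^\dagger) - J(\pi_Q) \leq \frac{\E_{d_{\pi^\dagger}}\left[\Tcal Q - Q\right]}{1-\gamma} + \frac{\E_{d_{\pi_Q}}\left[Q - \Tcal Q\right]}{1-\gamma}.
\end{align}

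The key observation is that both occupancy measures appearing here are generated by policies inside $\Pi_{\Qcal}$: the policy $\pi^\dagger$ belongs to $\Pi_{\Qcal}$ by construction, and $\pi_Q \in \Pi_{\Qcal}$ because $Q \in \Qcal$ and $\Pi_{\Qcal} = \{\pi_{Q'} : Q' \in \Qcal\}$. I would therefore bound each of the two numerators by its absolute value and then by the class maximum, using $\E_{d_{\pi_Q}}[Q - \Tcal Q] = -\E_{d_{\pi_Q}}[\Tcal Q - Q] \le \left|\E_{d_{\pi_Q}}[\Tcal Q - Q]\right|$ for the second term; since $\pi^\dagger, \pi_Q \in \Pi_{\Qcal}$, both numerators are at most $\max_{\pi \in \Pi_{\Qcal}} \left|\E_{d_\pi}[\Tcal Q - Q]\right|$. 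Summing the two identical upper bounds produces the factor of $2$ and the claimed inequality.

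There is no genuine obstacle once Theorem~\ref{thm:telepd} is in hand; the only point demanding care is the bookkeeping that a single maximum over $\Pi_{\Qcal}$ simultaneously dominates both terms, which in turn is why the statement must carry the absolute value. Indeed, the two average Bellman errors $\E_{d_{\pi^\dagger}}[\Tcal Q - Q]$ and $\E_{d_{\pi_Q}}[\Tcal Q - Q]$ may have opposite signs, so one cannot merge them without first passing to $\left|\cdot\right|$; dropping the absolute value would break the reduction to the common $\max_{\pi\in\Pi_{\Qcal}}$ quantity. I would flag that this is the looser, absolute-value form of the bound---appropriate here because we compare against the best policy in the class---and is deliberately coarser than the sign-sensitive statement of Theorem~\ref{thm:telepd} that the later \algavg\ analysis exploits.
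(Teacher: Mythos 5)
Your proof is correct and follows essentially the same route the paper takes (the corollary is used implicitly at the start of the proofs of Theorems~\ref{thm:minimax_FQI} and~\ref{thm:batch_olive_bound}): instantiate Theorem~\ref{thm:telepd} at the maximizer $\pi^\dagger\in\Pi_{\Qcal}$, observe that both $\pi^\dagger$ and $\pi_Q$ lie in $\Pi_{\Qcal}$, and dominate each of the two signed average Bellman error terms by $\max_{\pi\in\Pi_{\Qcal}}\left|\E_{d_\pi}[\Tcal Q - Q]\right|$ to obtain the factor of $2$. Your remark on why the absolute value is needed to merge the two oppositely-signed terms is accurate and consistent with the paper's discussion.
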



\section{\algsqfull (\algsq)}
\label{sec:fqi}

We present the performance guarantee of the first algorithm, \algsq, which uses another helper class $\Fcal \subset [0, \Vmax]^{\Scal\times\Acal}$ to model $\Tcal Q$ for any $Q\in\Qcal$, seeking to form an (approximately) unbiased estimate of the Bellman error $\|Q-\Tcal Q\|_{2,\mu}^2$:
\begin{align}
\label{eq:minimaxobj}
\Qhat = \argmin_{Q \in \Qcal}\max_{f \in \mathcal F} \left(\ell_{\mathcal D}(Q;Q) - \ell_{\mathcal D}(f;Q)\right),
\end{align}
where $ \ell_{\mathcal D}(\cdot;\cdot)$ is defined in Eq.\eqref{eq:fqiobj}. To give some intuitions, $\ell_{\Dcal}(Q;Q)$ over-estimates $\|Q-\Tcal Q\|_{2,\mu}^2$ (which is why the double sampling trick was invented in the first place~\citep{baird1995residual}), and the amount of over-estimation can be captured by $\min_{f\in\Fcal} \ell_{\Dcal}(f;Q)$ if $\Fcal$ is a rich function class satisfying $\Tcal Q \in \Fcal,\,\forall Q\in\Qcal$; see~\citet{antos2008learning, chen2019information} for further intuitions.  

We now state the guarantee of the algorithm.




\begin{theorem}[Improved error bound of \algsq]
\label{thm:minimax_FQI}
Let $\Qhat$ be the output of \algsq. W.p.~at least $1-\delta$,
\begin{align}
\max_{\pi \in \Pi_{\Qcal}} J(\pi) - J(\pi_{\Qhat}) \leq &~ \frac{2\sqrt{2 \Ceff}}{1 - \gamma} \left( \sqrt{\epQ} + \sqrt{\epQF} \right) \\
&~ + \frac{\sqrt{\Ceff}}{1 - \gamma} \mathcal O \left(\sqrt{\frac{V_{\max}^2 \ln \frac{|\Qcal| |\mathcal F|}{\delta}}{n}} + \sqrt[4]{\frac{V_{\max}^2 \ln \frac{|\Qcal|}{\delta}}{n}\epQ} + \sqrt[4]{\frac{V_{\max}^2 \ln \frac{|\Qcal| |\mathcal F|}{\delta}}{n}\epQF}\right),
\end{align}
where
\begin{align}
\Ceff \coloneqq  &~ \max_{\pi\in\PiQ} \|\wpi{\pi}\|_{2,\mu}^2. \\
\epQ \coloneqq &~ \min_{Q \in \Qcal} \|Q - \Tcal Q\|_{2,\mu}^2. \\
\epQF \coloneqq &~ \max_{Q \in \Qcal}\min_{f \in \mathcal F} \|f - \Tcal Q\|_{2,\mu}^2.
\end{align}
\end{theorem}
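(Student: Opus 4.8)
The plan is to combine the performance-difference bound of Corollary~\ref{cor:pistarpd} with a bound on the population, $\mu$-weighted Bellman error of the output $\Qhat$. Corollary~\ref{cor:pistarpd} applied to $\Qhat$ gives
\[
\max_{\pi\in\Pi_{\Qcal}} J(\pi) - J(\pi_{\Qhat}) \le \frac{2}{1-\gamma}\max_{\pi\in\Pi_{\Qcal}}\left|\Ebb_{d_\pi}[\Tcal\Qhat - \Qhat]\right|,
\]
so the first step is to strip away the dependence on $d_\pi$. Rewriting $\Ebb_{d_\pi}[\Tcal\Qhat - \Qhat] = \E_\mu[\wpi{\pi}\,(\Tcal\Qhat - \Qhat)]$ and applying Cauchy--Schwarz bounds the inner quantity by $\|\wpi{\pi}\|_{2,\mu}\,\|\Qhat - \Tcal\Qhat\|_{2,\mu}$; maximizing the first factor over $\pi\in\Pi_{\Qcal}$ produces exactly $\sqrt{\Ceff}$. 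It therefore remains to show $\|\Qhat-\Tcal\Qhat\|_{2,\mu}^2 \lesssim 2(\epQ + \epQF) + (\text{statistical error})$, after which $\sqrt{a+b}\le\sqrt a+\sqrt b$ delivers the claimed $\sqrt2\,(\sqrt{\epQ}+\sqrt{\epQF})$ leading term together with the $\sqrt{\Ceff}/(1-\gamma)$ prefactor.

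The heart of the argument is a bias--variance identity for the population version of the objective in Eq.\eqref{eq:minimaxobj}. For any target $Q$, the random backup $Y_Q \coloneqq r+\gamma\max_{a'}Q(s',a')$ has conditional mean $(\Tcal Q)(s,a)$, so for any $g$ depending only on $(s,a)$ the cross term vanishes and $\E_\mu[(g - Y_Q)^2] = \|g-\Tcal Q\|_{2,\mu}^2 + (\text{a variance term independent of }g)$. The two variance offsets therefore cancel in $\ell_{\mathcal D}(Q;Q) - \ell_{\mathcal D}(f;Q)$, and its population counterpart equals $\|Q-\Tcal Q\|_{2,\mu}^2 - \|f-\Tcal Q\|_{2,\mu}^2$. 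Taking $\max_{f\in\Fcal}$ and using $\epQF = \max_Q\min_{f}\|f-\Tcal Q\|_{2,\mu}^2$ sandwiches the population inner maximum between $\|Q-\Tcal Q\|_{2,\mu}^2 - \epQF$ and $\|Q-\Tcal Q\|_{2,\mu}^2$; in particular the population objective is, up to the additive constant $\epQF$, the Bellman error we must control.

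Next I would transfer this to the empirical object by uniform concentration. Writing the per-sample summand as $(Q-Y_Q)^2-(f-Y_Q)^2 = (Q-f)(Q+f-2Y_Q)$, its variance is at most $4V_{\max}^2\,\|Q-f\|_{2,\mu}^2$, so a Bernstein bound (with a union over $\Qcal$ for the diagonal $\ell_{\mathcal D}(Q;Q)$ term and over $\Qcal\times\Fcal$ for the $\ell_{\mathcal D}(f;Q)$ term, which is what produces the two different $\ln$ factors $\ln\frac{|\Qcal|}{\delta}$ and $\ln\frac{|\Qcal||\Fcal|}{\delta}$) controls the deviation by a leading $V_{\max}^2\ln(\cdot)/n$ term plus a variance term of order $\sqrt{\|Q-f\|_{2,\mu}^2\,\ln(\cdot)/n}$. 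Combining the optimality of $\Qhat$ for the empirical objective with a population comparison point $Q^\dagger$ achieving $\epQ$ (and a best-response $f^\dagger$ approximating $\Tcal Q^\dagger$) then yields $\|\Qhat-\Tcal\Qhat\|_{2,\mu}^2 \le \epQ + \epQF + (\text{deviation terms})$, where the relevant variances are bounded by $\|Q-f\|_{2,\mu}^2 \le 2(\|Q-\Tcal Q\|_{2,\mu}^2 + \|f-\Tcal Q\|_{2,\mu}^2)$.

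The main obstacle is the localization step, because the deviation terms involve $\|\Qhat-f\|_{2,\mu}^2$, which is itself controlled by the very Bellman error $\|\Qhat-\Tcal\Qhat\|_{2,\mu}^2$ we are trying to bound, making the inequality self-referential. I would resolve this with an AM--GM / self-bounding argument of the form $\sqrt{ab}\le \tfrac12 a + \tfrac12 b$, absorbing a $\tfrac12$ fraction of $\|\Qhat-\Tcal\Qhat\|_{2,\mu}^2$ from the right-hand side onto the left; solving the resulting inequality $\tfrac12\|\Qhat-\Tcal\Qhat\|_{2,\mu}^2 \le \epQ + \epQF + (\text{lower order})$ doubles the approximation-error contribution, which is precisely the source of the factor $2$ (hence the $\sqrt2$) in the leading term, and simultaneously converts the variance terms into the fourth-root statistical quantities $\sqrt[4]{V_{\max}^2\ln(\cdot)\,\epQ/n}$ and $\sqrt[4]{V_{\max}^2\ln(\cdot)\,\epQF/n}$. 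Extra care is needed because the empirical and population maximizers over $\Fcal$ differ, so the concentration must hold uniformly in $f$ rather than at a single $f$, and the deterministic error $\epQF$ must be booked at both $\Qhat$ and the comparison point $Q^\dagger$.
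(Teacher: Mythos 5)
Your proposal follows essentially the same route as the paper: Theorem~\ref{thm:telepd} (equivalently Corollary~\ref{cor:pistarpd}), a change of measure to $\mu$ via $\wpi{\pi}$, and Cauchy--Schwarz to reduce everything to $\sqrt{\Ceff}\,\|\Qhat-\Tcal \Qhat\|_{2,\mu}$. The only difference is that the paper imports the bound on $\|\Qhat-\Tcal \Qhat\|_{2,\mu}$ wholesale from \citet{chen2019information}, whereas you sketch its derivation (bias--variance cancellation of the two squared losses, Bernstein with union bounds over $\Qcal$ and $\Qcal\times\Fcal$, and the self-bounding localization step), which matches their argument.
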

This result improves over the bound of~\citet{chen2019information} in several aspects, which we explain below. Furthermore, their bound for \algsq is structurally the same as that for FQI when $\Fcal$ is set as $\Qcal$, and while we are able to improve the bound for \algsq, some of the improvements cannot be enjoyed by FQI (see the argument of~\citet{scherrer2012use}), creating a gap between performance guarantees of the two algorithms.

In the rest of this section, we explain the result and discuss its significance in detail. We also include a high-level sketch of the proof at the end, deferring the full proof to Appendix~\ref{sec:proofs}. 

\subsection{Errors Terms and Optimality} $\epQ$ measures the violation of the realizability assumption $Q^\star\in\Qcal$, and when the assumption holds exactly we have $\epQ=0$ as $\|Q^\star - \Tcal Q^\star\| =0$. Similarly, $\epQF$ measures the violation of the assumption that $\Tcal Q \in \Fcal, \forall Q\in\Qcal$. These definitions are directly taken from~\citet{chen2019information} and consistent with prior literature \citep[e.g.,][]{antos2008learning}. 
The statistical error term within $\Ocal(\cdot)$ is also the same as~\citet{chen2019information}, which consists of a $n^{-\nicefrac{1}{2}}$ fast rate term and two $n^{-\nicefrac{1}{4}}$ terms which vanish as the approximation errors $\epQ$ and $\epQF$ go to $0$. The novelty of the bound is in the multiplicative constants in front of these errors. 

Regarding the optimality guarantee (LHS of the bound), note that we compete with $\max_{\pi \in \PiQ} J(\pi)$ as the optimal value. Slightly modifying the analyses will immediately allow us to compete with any policy $\pi$ even if it is not in $\Pi_Q$ (e.g., $\pi^\star$), as long as we include the policy in the definition of $\Ceff$. 

\subsection{Concentrability Coefficient} \label{sec:con_coeff}
The distribution shift effects are characterized by $\Ceff$ in our bound. Not only this definition is much simpler, it is also tighter than previous definitions in two ways, and we start with the minor one: we use a weighted square of $\wpi{\pi}$ rather than its $\ell_\infty$ norm, the latter of which is more common in literature~\citep{munos2007performance, munos2008finite, antos2008learning, chen2019information}. It is easy to show that the squared version is tighter~\citep{farahmand2010error}: for example, consider the $\ell_\infty$ version of our $\Ceff$, which should be defined as
\begin{align}
\Cinf \coloneqq \max_{\pi \in \PiQ} \|\wpi{\pi}\|_\infty.
\end{align}
One can easily show that $\Ceff$ is tighter: for any $\pi \in \PiQ$,
\begin{align}
\|\wpi{\pi}\|_{2,\mu}^2 = \E_{\mu}[\wpi{\pi}^2] 
\le \E_{\mu}[\Cinf \wpi{\pi}] = \Cinf.
\end{align}
The second improvement, which is much more significant, is the departure from ``per-step'' definitions. In all analyses of AVI/API, the concentrability coefficient takes the form of
\begin{align}\label{eq:Cps}
\Cps \coloneqq \sum_{t=0}^\infty \beta(t) C_t,~~ C_t \coloneqq \max_{\pi} \|\wpi{\pi,t}\|_\infty,
\end{align}
where $d_{\pi,t}$ is the marginal distribution
of $(s_t,a_t)$. $\beta(t)$ is a series of non-negative coefficients that sum up to $1$. Different versions of $\Cps$ differ in $\beta(t)$, the policy space considered in $\max_\pi$ (typically non-stationary policies concatenated using policies from $\PiQ \bigcup \{\pi^\star\}$), and sometimes replacing $\|\cdot\|_\infty$ with $\|\cdot\|_2^2$; see~\citet{farahmand2010error} for a detailed discussion. 
While it is difficult to directly compare this quantity to ours due to its complication, we show that in a simplest uncontrolled scenario where there is no distribution shift at all, \emph{any} per-step definition will be at least $1/(1-\gamma)$ looser than ours. We include an intuitive but informal statement below, and defer the detailed discussions to Appendix~\ref{sec:per_step_con}. 
\begin{proposition}[Informal] \label{prop:per_step_con}
Consider an uncontrolled deterministic problem (there is only 1 action) formed by a long chain of states. Let $\mu = d_\pi$ where $\pi$ is the only policy. $\Cinf = \Ceff = 1$, and any definition of $\Cps \ge 1/(1-\gamma)$.
\end{proposition}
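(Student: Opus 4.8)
The plan is to instantiate the chain MDP explicitly, compute the data distribution $\mu = d_\pi$ in closed form, and then evaluate all three concentrability quantities directly. Because the problem is uncontrolled, $\PiQ$ contains the single policy $\pi$, so every outer maximization over policies collapses and I only have to track one importance weight. Concretely, I would take a deterministic chain $s_0 \to s_1 \to s_2 \to \cdots$ with point-mass initial distribution $d_0 = \delta_{s_0}$ and all states distinct (this is what ``long chain'' buys us). Under $\pi$ the trajectory is deterministic, so the time-$t$ marginal $d_{\pi,t}$ is the point mass at $s_t$, and the discounted occupancy evaluates to $d_\pi(s_t) = (1-\gamma)\gamma^t$; setting $\mu = d_\pi$ gives $\mu(s_t) = (1-\gamma)\gamma^t$.

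The first claim, $\Cinf = \Ceff = 1$, is then immediate: since $\mu = d_\pi$, the importance weight $\wpi{\pi} = d_\pi/\mu$ equals $1$ everywhere on the support, so both $\|\wpi{\pi}\|_\infty$ and $\|\wpi{\pi}\|_{2,\mu}^2 = \E_\mu[1]$ equal $1$. For the per-step quantity, the key computation is $\wpi{\pi,t} = d_{\pi,t}/\mu$, which—since $d_{\pi,t}$ is the point mass at $s_t$—is supported on the single state $s_t$ with value $1/\mu(s_t) = 1/((1-\gamma)\gamma^t)$. Hence $C_t = 1/((1-\gamma)\gamma^t)$ whether $C_t$ is defined through the $\ell_\infty$ norm or the $\mu$-weighted square $\E_\mu[\wpi{\pi,t}^2]$: the point-mass structure makes the two coincide, which is precisely what lets the argument cover \emph{any} per-step definition.

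The final step is a uniform lower bound. Since $\gamma^t \le 1$ we have $\mu(s_t) = (1-\gamma)\gamma^t \le 1-\gamma$, hence $C_t \ge 1/(1-\gamma)$ for every $t$. Because $\{\beta(t)\}$ is a non-negative sequence summing to $1$, it follows that $\Cps = \sum_t \beta(t)\, C_t \ge (1/(1-\gamma))\sum_t \beta(t) = 1/(1-\gamma)$, independently of the particular weights $\beta$ used.

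I expect the only genuine subtlety to be formalizing ``any per-step definition'' uniformly—verifying that the conclusion is insensitive to the weighting sequence $\beta(t)$, to the choice of norm inside $C_t$, and to the policy set in the (here trivial) outer maximization—together with the harmless bookkeeping needed to make the chain finite (for instance, terminating in an absorbing self-loop) while keeping every relevant state visited at a unique time step, so that $d_{\pi,t}$ remains a point mass and $\mu(s_t) = (1-\gamma)\gamma^t$ continues to hold. The arithmetic itself is elementary; the care lies in stating the result in a way that provably dominates the entire family of per-step coefficients at once.
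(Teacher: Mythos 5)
Your proposal is correct and follows essentially the same route as the paper's own argument in Appendix~\ref{sec:per_step_con}: the same deterministic chain with $\mu = d_\pi$, the same closed form $\mu(s_t) = (1-\gamma)\gamma^t$, the same point-mass computation giving $C_t = 1/((1-\gamma)\gamma^t) \ge 1/(1-\gamma)$ under either the $\ell_\infty$ or the $\mu$-weighted squared norm, and the same convex-combination lower bound over arbitrary $\beta(t)$. The one detail you defer---the absorbing terminal state $s_L$, where $\mu(s_L) = \gamma^L$ rather than $(1-\gamma)\gamma^L$, so that $C_t = \gamma^{-L}$ for $t \ge L$---is resolved in the paper exactly as you anticipate, by taking $L$ large enough that $\gamma^{-L} \ge 1/(1-\gamma)$.
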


\subsection{Horizon Dependence} \label{sec:msbo_linear}
We now verify that the bound has linear dependence on horizon. Doing so can be tricky given the complicated expression, and we provide 3 verification methods following the conventions in the literature~\citep{scherrer2014approximate}: The first one is to observe that FQI has quadratic dependence on horizon and our bound for \algsq has a $\nicefrac{1}{(1-\gamma)}$ net improvement over FQI~\citep{chen2019information}. The second one is to read the expression, and count the explicit dependence; 
while the statistical error depends on $\Vmax = R_{\max}/(1-\gamma)$, such a dependence is superficial and not produced by error accumulation over multi-stage decision-making, and is never counted in the literature.\footnote{See~\citet{jiang2018open} for a deeper discussion.} 
The third method is to consider the fully realizable case ($\epQ = \epQF =0$) and calculate the sample complexity. Since the statistical rate is $1/\sqrt{n}$, an algorithm with linear-in-horizon error propagation should have $\Ocal(1/(1-\gamma)^2)$ sample complexity, which we show below. This contrasts the $\Ocal(1/(1-\gamma)^4)$ sample complexity of FQI~\citep{chen2019information}.
\begin{corollary}[Improved sample complexity of \algsq]
Let $\epQ = \epQF=0$. For any $\epsilon,\delta >0$, Eq.\eqref{eq:minimaxobj} satisfies $\max_{\pi \in \Pi_{\Qcal}} J(\pi) - J(\pi_{\Qhat}) \leq \varepsilon \cdot V_{\max}$ w.p.~$\ge 1 - \delta$, if
\begin{align}
n = \mathcal O \left( \frac{\Ceff \ln \frac{|\Qcal||\mathcal F|}{\delta}}{\varepsilon^2 (1 - \gamma)^2} \right).
\end{align}
\end{corollary}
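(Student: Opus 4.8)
The plan is to derive this as an immediate consequence of Theorem~\ref{thm:minimax_FQI}, by specializing its bound to the realizable case $\epQ = \epQF = 0$ and then solving the resulting inequality for the sample size $n$.

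First I would substitute $\epQ = \epQF = 0$ into the bound of Theorem~\ref{thm:minimax_FQI}. The leading approximation term $\tfrac{2\sqrt{2\Ceff}}{1-\gamma}(\sqrt{\epQ}+\sqrt{\epQF})$ vanishes entirely, and both slow $n^{-1/4}$ terms inside the $\mathcal{O}(\cdot)$---each of which carries a factor of $\sqrt[4]{\epQ}$ or $\sqrt[4]{\epQF}$---also drop out. What survives is only the fast-rate statistical term, so that with probability at least $1-\delta$,
\begin{align}
\max_{\pi \in \Pi_{\Qcal}} J(\pi) - J(\pi_{\Qhat}) \leq \frac{\sqrt{\Ceff}}{1-\gamma}\,\mathcal{O}\!\left(\sqrt{\frac{V_{\max}^2 \ln\frac{|\Qcal||\mathcal{F}|}{\delta}}{n}}\right).
\end{align}

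Next I would require the right-hand side to be at most the target accuracy $\varepsilon \cdot V_{\max}$ and invert for $n$. The key observation is that the $V_{\max}$ emerging from $\sqrt{V_{\max}^2}$ factors out of the square root and cancels against the $V_{\max}$ on the right, precisely because the error is measured relative to $V_{\max}$; this is why $V_{\max}$ does not appear in the final expression. Dividing through by $V_{\max}$, moving the horizon and concentrability factors across, and squaring then yields
\begin{align}
n = \mathcal{O}\!\left(\frac{\Ceff \ln\frac{|\Qcal||\mathcal{F}|}{\delta}}{\varepsilon^2(1-\gamma)^2}\right),
\end{align}
as claimed.

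There is no genuine obstacle here, since the statement is a routine algebraic corollary of the main theorem; the only points deserving care are tracking the cancellation of $V_{\max}$ and confirming that the surviving term is indeed the $n^{-1/2}$ fast rate, so that the sample complexity scales as $\varepsilon^{-2}$ rather than $\varepsilon^{-4}$. The linear-in-horizon property is then directly legible: a single factor of $\tfrac{1}{1-\gamma}$ in the error bound becomes $\tfrac{1}{(1-\gamma)^2}$ after inverting the $n^{-1/2}$ rate, in contrast with the $\tfrac{1}{(1-\gamma)^4}$ that FQI incurs from its quadratic error dependence.
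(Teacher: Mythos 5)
Your proposal is correct and matches the route the paper takes: the corollary is obtained exactly by setting $\epQ = \epQF = 0$ in Theorem~\ref{thm:minimax_FQI}, observing that only the $n^{-\nicefrac{1}{2}}$ statistical term survives, and inverting $\frac{\sqrt{\Ceff}\, V_{\max}}{1-\gamma}\sqrt{\ln\frac{|\Qcal||\Fcal|}{\delta}/n} \le \varepsilon V_{\max}$ for $n$. The cancellation of $V_{\max}$ and the resulting $\varepsilon^{-2}(1-\gamma)^{-2}$ scaling are exactly as the paper intends.
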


\subsection{Proof Sketch} \label{sec:msbo_sketch}
We sketch the high-level proof here, deferring the details to Appendix~\ref{sec:proofs}; this analysis is relatively straightforward due to existing work (compared to \algavg, which is novel). To bound $J(\pi) - J(\pi_{\Qhat})$ for any $\pi \in \PiQ$, we invoke Theorem~\ref{thm:telepd}, which produces two average Bellman error terms of form $|\E_{d_\pi}[\Tcal \Qhat - \Qhat]|$. Then
\begin{align}
|\E_{d_\pi}[\Tcal \Qhat - \Qhat]| = |\E_{\mu}[\wpi{\pi}\cdot(\Tcal \Qhat - \Qhat)]| \le \sqrt{\E_{\mu}[\wpi{\pi}^2] \E_{\mu}[(\Tcal \Qhat- \Qhat)^2]} 
\le \sqrt{\Ceff} \|\Tcal \Qhat - \Qhat \|_{2, \mu}.
\end{align}
The last step follows from Cauchy-Schwarz for random variables, and the term $\|\Tcal \Qhat - \Qhat \|_{2, \mu}$ is well-studied by~\citet{chen2019information} and we directly use their result.


\section{\algavgfull (\algavg)}
\label{sec:mabo}
We introduce and analyze our second (and novel) algorithm, \algavg, which directly estimates the average Bellman errors (allowing sign cancellations) that show up in the telescoping results from Section~\ref{sec:telepd} by explicit importance-weighting correction. Doing so requires an additional function approximator $\Wcal$ to model the marginalized importance weights (see Section~\ref{sec:vfa}), $\Wcal \subset \RR^{\Scal\times\Acal}$, in addition to the $\Qcal$ class that models $Q^\star$. Given $\Qcal$ and $\Wcal$, the algorithm is
\begin{align}
\label{eq:MABO}
\Qhat = \argmin_{Q  \in \Qcal} \max_{w \in \Wcal} \quad \left|\mathcal L_{\mathcal D} (Q,w)\right|,
\end{align}
where
\begin{align}
\Lcal_{\Dcal} (Q,w) \coloneqq \E_{\Dcal} \left[w(s,a) \left(r + \gamma \max_{a'}Q(s',a') - Q(s,a) \right)\right].
\end{align}
It is important to point out that we only use the single sample estimate of Bellman error (i.e., no double sampling), but we obtain an unbiased estimate of average Bellman error thanks to not using the squared loss (unlike $\ell_{\Dcal}(Q;Q)$ in \algsq, which is an over-estimation). To see how $\Lcal_{\Dcal}(Q,w)$ is related to the average Bellman errors, simply consider its population version: 
\begin{align} \label{eq:mabo_popu}
\mathcal L_{\mu} (Q,w) \coloneqq \E_{\Dcal} [\Lcal_{\Dcal}(Q,w)] \nonumber =  \E_{(s,a) \sim \mu} \left[w(s,a) \left((\Tcal Q)(s,a) - Q(s,a) \right) \right],
\end{align}
thus $\Lcal_\mu(Q, \wpi{\pi}) = \E_{d_\pi}[\Tcal Q - Q]$. Therefore, as long as $\Wcal$ realizes $\wpi{\pi}$ for all $\pi\in\PiQ$ (this assumption will be relaxed), $\max_{w\in\Wcal} |\Lcal_\mu(Q,w)|$ will control the suboptimality gap of $\pi_Q$, which is the intuition for the algorithm. 

We now state the guarantee of this algorithm. For convenience, we will use $\E_{\mu}[w\cdot(\Tcal Q - Q)]$ as a shorthand for Eq.\eqref{eq:mabo_popu} in the rest of this paper. 
\begin{theorem}[Error bound of MABO]
\label{thm:batch_olive_bound}
Let $\Qhat$ be the output of \algavg. W.p.~$1 - \delta$, 
\begin{align}
\max_{\pi \in \Pi_{\Qcal}} J(\pi) - J(\pi_{\Qhat}) \leq \frac{2}{1 - \gamma}\left(\epsQ + \epsW + \epsstat \right).
\end{align}
where
\begin{align}
\epsQ \coloneqq &~ \min_{Q \in \Qcal} \max_{w \in \Wcal}\left|\E_{\mu}[w\cdot(\Tcal Q - Q)]\right|, \\
\epsW \coloneqq &~ \max_{\pi \in \Pi_{\Qcal}} \inf_{w \in \lspan(\Wcal)} \max_{Q \in \Qcal} \bigg|\E_{\mu} \big[(\wpi{\pi} - w) \cdot(\Tcal Q - Q)\big]\bigg|, \\ 
\epsstat \coloneqq &~ 2 V_{\max} \sqrt{\frac{2 \CeffW \ln\frac{2|\Qcal||\Wcal|}{\delta}}{n}} + \frac{4 \CinfW V_{\max} \ln\frac{2|\Qcal||\Wcal|}{\delta}}{3n},\\
\CeffW \coloneqq &~ \max_{w\in\Wcal} \|w\|_{2, \mu}^2,  \quad
\CinfW \coloneqq \max_{w\in\Wcal} \|w\|_\infty,
\end{align}
and 
$\lspan(\Wcal)$ is the linear span of $\Wcal$ using coefficients with (at most) unit $\ell_1$ norm, i.e., 
$$ \textstyle 
\lspan(\Wcal) \coloneqq \left\{\sum_{w \in \Wcal} \alpha(w) w : \sum_{w\in\Wcal}|\alpha(w)| \le 1\right\}.$$ 
\end{theorem}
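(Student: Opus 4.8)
The plan is to reduce the performance loss to a single population quantity using the telescoping corollary, and then peel that quantity apart into the three named error terms by exploiting the fact that $\Lcal_\mu(Q,w)$ is \emph{linear} in its weight argument $w$. First I would apply Corollary~\ref{cor:pistarpd} with $Q=\Qhat$ to get
\begin{align}
\max_{\pi\in\PiQ} J(\pi) - J(\pi_{\Qhat}) \le \frac{2}{1-\gamma}\,\max_{\pi\in\PiQ}\left|\E_{d_\pi}\left[\Tcal\Qhat - \Qhat\right]\right|.
\end{align}
Invoking the identity $\E_{d_\pi}[\Tcal Q - Q] = \Lcal_\mu(Q,\wpi{\pi})$ stated before the theorem, it then suffices to prove $\max_{\pi\in\PiQ}|\Lcal_\mu(\Qhat,\wpi{\pi})| \le \epsQ + \epsW + \epsstat$, since the prefactor $2/(1-\gamma)$ reproduces the claimed bound.

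Next I would handle the gap between the true importance weight $\wpi{\pi}$ and the searchable class $\Wcal$. For a fixed $\pi$, choose $w\in\lspan(\Wcal)$ to (nearly) attain the infimum in the definition of $\epsW$, and split using linearity,
\begin{align}
\left|\Lcal_\mu(\Qhat,\wpi{\pi})\right| \le \left|\E_{\mu}\big[(\wpi{\pi}-w)\cdot(\Tcal\Qhat-\Qhat)\big]\right| + \left|\Lcal_\mu(\Qhat,w)\right|.
\end{align}
Because the chosen $w$ is uniform over $Q\in\Qcal$, the first term is at most $\max_{Q\in\Qcal}|\E_\mu[(\wpi{\pi}-w)(\Tcal Q-Q)]|$, which after the outer $\max_\pi$ is exactly $\epsW$. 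For the second term, writing $w=\sum_{w'\in\Wcal}\alpha(w')w'$ with $\sum_{w'}|\alpha(w')|\le 1$ and again using linearity gives $|\Lcal_\mu(\Qhat,w)| \le \sum_{w'}|\alpha(w')|\,|\Lcal_\mu(\Qhat,w')| \le \max_{w'\in\Wcal}|\Lcal_\mu(\Qhat,w')|$. This is the crucial step: searching empirically over $\Wcal$ alone automatically controls the error against the entire unit-$\ell_1$-norm span $\lspan(\Wcal)$, which is where the ``augmented expressivity'' advantage comes from.

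Finally I would pass from population to sample quantities. Let $\Delta \coloneqq \max_{Q\in\Qcal,\,w\in\Wcal}|\Lcal_\Dcal(Q,w)-\Lcal_\mu(Q,w)|$. Each summand $w(s,a)\big(r+\gamma\max_{a'}Q(s',a')-Q(s,a)\big)$ has range at most $\CinfW\Vmax$ (the Bellman residual lies in $[-\Vmax,\Vmax]$) and second moment at most $\Vmax^2\CeffW$, so Bernstein's inequality with a union bound over the $|\Qcal||\Wcal|$ pairs yields $\Delta \le \epsstat/2$ with probability $1-\delta$. Then, using optimality of $\Qhat$ for the empirical objective and letting $Q^\dagger$ attain $\epsQ$,
\begin{align}
\max_{w\in\Wcal}\left|\Lcal_\mu(\Qhat,w)\right| \le \max_{w\in\Wcal}\left|\Lcal_\Dcal(\Qhat,w)\right| + \Delta \le \max_{w\in\Wcal}\left|\Lcal_\Dcal(Q^\dagger,w)\right| + \Delta \le \epsQ + 2\Delta \le \epsQ + \epsstat.
\end{align}
Combining the three steps gives $\max_{\pi\in\PiQ}|\Lcal_\mu(\Qhat,\wpi{\pi})|\le\epsQ+\epsW+\epsstat$, completing the argument.

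The main obstacle I anticipate is the middle step: correctly exploiting linearity of $\Lcal_\mu$ to show that the empirically searchable $\Wcal$ dominates every weight in its $\ell_1$-ball span, \emph{and} verifying that the $\epsW$ approximant can be selected uniformly over $\Qcal$ so that it remains valid for the data-dependent $\Qhat$. The concentration step is routine but must be tracked carefully so that the variance proxy $\CeffW$ and range bound $\CinfW$ match the stated $\epsstat$ exactly, and so that applying the deviation bound twice (as $2\Delta$) is absorbed precisely into the factor defining $\epsstat$ rather than inflating the constant.
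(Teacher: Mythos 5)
Your proposal is correct and follows essentially the same route as the paper's proof: reduce to the worst-case average Bellman error via the telescoping result, peel off $\epsW$ using a $Q$-uniform approximant from $\lspan(\Wcal)$, use linearity of $\Lcal_\mu(\cdot,w)$ in $w$ to collapse $\sup_{w\in\lspan(\Wcal)}$ to $\max_{w\in\Wcal}$, and finish with empirical optimality of $\Qhat$ plus two applications of a Bernstein-and-union-bound deviation term, each worth $\epsstat/2$. Your streamlined chain via $\Delta$ is a cleaner packaging of the paper's terms (I) and (II), but it is the same argument with the same constants.
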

%
%
%
%

In the rest of this section, we explain the bound and discuss its significance.

\subsection{Error Terms and Augmented Expressivity} \label{sec:improper}
Similar to $\epQ$ for \algsq, $\epsQ$ also measures the violation of $Q^\star \in \Qcal$, though in a different manner: we measure $Q$'s worst-case average Bellman error on any $w \in \Wcal$.

The situation of $\epsW$ is a little more special. Despite that we provide intuition for \algavg by requiring that $\wpi{\pi_Q}\in \Wcal, \forall Q\in\Qcal$, it turns out we only need a much more relaxed version of this assumption (and can measure violation against the relaxed version): thanks to the linearity of $\Lcal_{\Dcal}(Q,\cdot)$, we are automatically approximating $\wpi{\pi_Q}$ from an augmented class $\lspan(\Wcal)$.\footnote{Similar properties have been recognized regarding the policy evaluation counterpart of \algavg~\citep{uehara2019minimax}.} Moreover, the loss $\Lcal_{\Dcal}(Q,w)$ is ``scale-free'' w.r.t.~$w$, i.e., it is completely equivalent to replace $\Wcal$ with any $c \Wcal \coloneqq \{cw : w\in\Wcal\}$, for any $c\ne 0$. Therefore, we may rescale $\Wcal$ arbitrarily in the theorem to obtain the sharpest bound. 

To help develop further intuition, we illustrate the idea using a familiar tabular example: Consider the case where $|\Scal|$ and $|\Acal|$ are manageable and we use a tabular function class $\Qcal \coloneqq [0, \Vmax]^{\Scal\times\Acal}$. It is easy to see that we can recover the standard tabular model-based algorithm (a.k.a.~\emph{certainty-equivalence}, or C-E) by using $\Wcal = \{(s,a) \mapsto \mathds{1}(s=s^*, a=a^*): s^*\in \Scal, a^* \in \Acal\}$, i.e., a set of $|\Scal\times\Acal|$ indicator functions. This is because the lowest possible value for the objective is $0$, achieving which requires that $|\Lcal_\Dcal(Q,w)|=0, \forall w \in \Wcal$. This set of $|\Wcal|=|\Scal\times\Acal|$ equations is essentially the Bellman equation for each state-action pair in the empirical MDP, which can and can only be satisfied by the C-E solution. While the C-E solution incurs no approximation error, $\Wcal$ clearly fails to realize $\wpi{\pi_Q}$ for all $Q\in\Qcal$. The reason, as we have already explained earlier, is because $\lspan(\Wcal)$---which now becomes the tabular function space---can model any importance weights with proper scaling.

As a final remark, given any $w\in\lspan(\Wcal)$ and the target importance weight $\wpi{\pi_Q}$, we measure their distance by projecting their difference using $\Tcal Q - Q$ for the worst-case $Q \in \Qcal$. If we treat it as approximating distribution $d_\pi$ with $(\mu \,\cdot\, w)(s,a):= \mu(s,a) w(s,a)$, then this measure is essentially the Integral Probability Metric~\citep{muller1997integral} between $d_\pi$ and $\mu\,\cdot\, w$ using a discriminator class induced by $\Qcal$.

\subsection{Concentrability Coefficients} \label{sec:CW}
Our $\CeffW$ and $\CinfW$ are defined in a way similar to $\Ceff$ and $\Cinf$ in Section~\ref{sec:fqi}, except that we consider $w\in \Wcal$, i.e., the functions provided by the function approximator $\Wcal$ instead of the true importance weights $\wpi{\pi_Q}$ themselves. While these two sets of coefficients are not directly comparable, we provide some insights about their relationship. 

On one hand, if we choose $\Wcal = \{\wpi{\pi_Q}: Q\in\PiQ\}$, which precisely satisfies the expressivity assumption, then $\CeffW = \Ceff$ and $\CinfW = \Cinf$. Given that $\Wcal$ is likely to include other functions as well, we might conclude that $\CeffW$ and $\CinfW$ are in general greater. On the other hand, to satisfy $\epsW=0$ we only need $\lspan(\Wcal)$ to be the above-mentioned class, and the actual $\Wcal$ could be smaller and simpler. Also, since $\CeffW$ and $\CinfW$ only occur in the statistical error term in Theorem~\ref{thm:batch_olive_bound} (which is in sharp contrast to Theorem~\ref{thm:minimax_FQI}, where $\Ceff$ also amplifies approximation errors), the damage caused by $w \in \Wcal$ with unnecessarily large magnitude can be mitigated by proper regularization (see e.g., \citet{kallus2016generalized, hirshberg2017augmented, su2019doubly} for how importance weights can be regularized in contextual bandits). Given these competing considerations, we suggest that it is reasonable to treat $\CeffW \approx \Ceff$, $\CinfW \approx \Cinf$. 

\subsection{Horizon Dependence}
The linear dependence on horizon of Theorem~\ref{thm:batch_olive_bound} can be verified in a way similar to Section~\ref{sec:msbo_linear}, and we only include the sample complexity of \algavg when all the expressivity assumptions are met exactly. The sample complexity contains two terms corresponding to the slow rate ($n^{-\nicefrac{1}{2}}$) and the fast rate ($n^{-1}$) terms in $\epsstat$, and when $\CinfW$ is not too much larger than $\CeffW$,\footnote{E.g., $\CeffW = \CinfW$ when $\Wcal$ only contains indicator functions (e.g., in the tabular scenario in Section~\ref{sec:CW}).} the fast rate term is dominated and the sample complexity is very similar to that of \algsq.
\begin{corollary}[Sample complexity of \algavg]
\label{cor:mabo_spcplx}
Suppose $\epsQ = \epsW=0$. The output of \algavg Eq.\eqref{eq:MABO}, satisfies $\max_{\pi \in \Pi_{\Qcal}} J(\pi) - J(\pi_{\Qhat}) \leq \varepsilon \cdot V_{\max}$ ~w.p.~$1 - \delta$, if
\begin{align}
n = \mathcal O \left(\left(\frac{\CeffW}{\varepsilon^2 (1 - \gamma)^2} + \frac{\CinfW}{\varepsilon (1-\gamma)} \right) \ln \frac{|\Qcal||\Wcal|}{\delta}\right).
\end{align}
\end{corollary}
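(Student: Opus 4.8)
The plan is to start from Corollary~\ref{cor:pistarpd} applied to $Q=\Qhat$, which gives
\[
\max_{\pi \in \PiQ} J(\pi) - J(\pi_{\Qhat}) \le \frac{2}{1-\gamma}\, \max_{\pi \in \PiQ} \left| \E_{d_\pi}[\Tcal \Qhat - \Qhat] \right|,
\]
so the whole task reduces to bounding the worst-case average Bellman error of $\Qhat$ over $\pi \in \PiQ$ by $\epsQ + \epsW + \epsstat$. Recalling the identity $\E_{d_\pi}[\Tcal Q - Q] = \E_\mu[\wpi{\pi}\cdot(\Tcal Q - Q)] = \Lcal_\mu(Q,\wpi{\pi})$ from Eq.~\eqref{eq:mabo_popu}, everything is now phrased through the population loss $\Lcal_\mu$, and the remaining work splits into (i) replacing the ideal weight $\wpi{\pi}$ by an element of $\lspan(\Wcal)$, and (ii) passing from $\Lcal_\mu$ to the empirical loss $\Lcal_\Dcal$ that the algorithm minimizes.

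For step (i), fix $\pi \in \PiQ$ and let $w_\pi \in \lspan(\Wcal)$ (nearly) attain the infimum defining $\epsW$. Writing $\wpi{\pi} = (\wpi{\pi} - w_\pi) + w_\pi$ and using linearity of $\Lcal_\mu(Q,\cdot)$ in its weight argument,
\[
\left| \E_{d_\pi}[\Tcal \Qhat - \Qhat] \right| \le \left| \E_\mu[(\wpi{\pi} - w_\pi)(\Tcal \Qhat - \Qhat)] \right| + \left| \Lcal_\mu(\Qhat, w_\pi) \right|.
\]
Since $\Qhat \in \Qcal$, the first term is at most $\max_{Q \in \Qcal} |\E_\mu[(\wpi{\pi} - w_\pi)(\Tcal Q - Q)]|$, which after taking $\max_\pi$ is exactly $\epsW$. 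For the second term, the key \emph{augmented expressivity} observation is that if $w_\pi = \sum_{w \in \Wcal} \alpha(w)\, w$ with $\sum_w |\alpha(w)| \le 1$, then $|\Lcal_\mu(\Qhat, w_\pi)| \le \sum_w |\alpha(w)|\,|\Lcal_\mu(\Qhat, w)| \le \max_{w \in \Wcal} |\Lcal_\mu(\Qhat, w)|$; here both the genuine linearity of $\Lcal$ in $w$ (i.e.\ the absence of a squared loss) and the unit-$\ell_1$ definition of $\lspan(\Wcal)$ are essential. Hence $\max_\pi |\E_{d_\pi}[\Tcal \Qhat - \Qhat]| \le \epsW + \max_{w \in \Wcal} |\Lcal_\mu(\Qhat, w)|$.

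For step (ii), I would establish a uniform deviation bound: with probability $\ge 1-\delta$, $|\Lcal_\mu(Q,w) - \Lcal_\Dcal(Q,w)| \le \epsstat/2$ for all $(Q,w) \in \Qcal \times \Wcal$ simultaneously. Granting this, optimality of $\Qhat$ closes the argument in the standard way: letting $\bar{Q}$ attain $\epsQ$,
\[
\max_{w \in \Wcal} |\Lcal_\mu(\Qhat, w)| \le \max_{w} |\Lcal_\Dcal(\Qhat, w)| + \tfrac{\epsstat}{2} \le \max_w |\Lcal_\Dcal(\bar{Q}, w)| + \tfrac{\epsstat}{2} \le \max_w |\Lcal_\mu(\bar{Q}, w)| + \epsstat = \epsQ + \epsstat,
\]
where the middle inequality uses that $\Qhat$ minimizes $\max_w |\Lcal_\Dcal(Q,w)|$. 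Chaining the three displays yields the theorem.

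The main obstacle is the uniform deviation bound, and specifically obtaining the \emph{variance-based} slow-rate term featuring $\CeffW = \max_w \|w\|_{2,\mu}^2$ rather than a cruder $\CinfW^2$. For fixed $(Q,w)$ the summands $w(s,a)(r + \gamma \max_{a'} Q(s',a') - Q(s,a))$ are bounded in magnitude by $\CinfW V_{\max}$ (since $r + \gamma \max_{a'} Q(s',a') - Q(s,a) \in [-V_{\max}, V_{\max}]$) and have variance at most $V_{\max}^2\, \E_\mu[w^2] \le V_{\max}^2\, \CeffW$; Bernstein's inequality then produces the two pieces of $\epsstat/2$ (a $\sqrt{\CeffW/n}$ slow-rate term and a $\CinfW/n$ fast-rate term), and a union bound over the $|\Qcal||\Wcal|$ pairs together with a two-sided factor accounts for the $\ln(2|\Qcal||\Wcal|/\delta)$. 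The only delicate bookkeeping is tracking the constants so that the doubling incurred by applying concentration to both $\Qhat$ and $\bar{Q}$ matches the stated $\epsstat$.
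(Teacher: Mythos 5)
Your argument is correct and follows essentially the same route as the paper: it reproduces the proof of Theorem~\ref{thm:batch_olive_bound} (telescoping via Corollary~\ref{cor:pistarpd}, peeling off $\epsW$ through a near-minimizer in $\lspan(\Wcal)$, the linearity/unit-$\ell_1$ trick reducing $\sup_{w\in\lspan(\Wcal)}$ to $\max_{w\in\Wcal}$, and a Bernstein-plus-union-bound deviation of $\epsstat/2$ applied on both sides of the ERM comparison), from which the corollary is immediate. The only step you leave implicit is the final inversion that the corollary actually asserts: with $\epsQ=\epsW=0$ the bound is $\frac{2}{1-\gamma}\epsstat$, and requiring $\frac{2}{1-\gamma}\epsstat \le \varepsilon V_{\max}$ forces $n \gtrsim \CeffW\ln\frac{|\Qcal||\Wcal|}{\delta}\big/\bigl(\varepsilon^2(1-\gamma)^2\bigr)$ from the $n^{-1/2}$ term and $n \gtrsim \CinfW\ln\frac{|\Qcal||\Wcal|}{\delta}\big/\bigl(\varepsilon(1-\gamma)\bigr)$ from the $n^{-1}$ term, which together give the stated sample complexity.
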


\subsection{Proof Sketch of Theorem \ref{thm:batch_olive_bound}}
We conclude the section by a high-level proof sketch. With Theorem~\ref{thm:telepd}, it suffices to control 
$|\E_{d_\pi}[\Tcal \Qhat - \Qhat]|$ $= |\E_{\mu}[ \wpi{\pi} \cdot(\Tcal \Qhat - \Qhat)]|$ for the worst-case $\pi\in\PiQ$. Fixing any $\pi$, the first step is to peel off the approximation error of $\Wcal$:  for any $w\in\lspan(\Wcal)$, we have
\begin{align}
&~ |\E_{\mu}[\wpi{\pi} \cdot(\Tcal \Qhat - \Qhat)]| \\
\le &~ |\E_{\mu}[(\wpi{\pi} - w)  (\Tcal \Qhat - \Qhat)]| + |\E_{\mu}[w \cdot(\Tcal \Qhat - \Qhat)]|  \\
\le &~ \max_{Q \in \Qcal}|\E_{\mu}[(\wpi{\pi} - w)(\Tcal Q - Q)]| + |\E_{\mu}[w\cdot(\Tcal \Qhat - \Qhat)]|.
\end{align}
So if we choose $w$ as the one that achieves the infimum in the definition of $\epsW$, denoted as $\widehat w$, then the first term is bounded by $\epsW$. The second term is much closer to the loss function of \algavg, and can be handled as
\begin{align*}
|\E_{\mu}[ \widehat w \cdot (\Tcal \Qhat - \Qhat)]| \le &~ \sup_{w \in \lspan(\Wcal)}|\E_{\mu}[ w\cdot (\Tcal \Qhat - \Qhat)]| \\
=  &~ \max_{w \in \Wcal}|\E_{\mu}[w \cdot (\Tcal \Qhat - \Qhat)]|.
\end{align*}
Crucially, using the linearity of $\E_{\mu}[w\cdot(\cdot)]$ in $w$ and the norm constraints of $\lspan(\cdot)$, we are able to replace $\sup_{w\in\lspan(\Wcal)}$ with $\max_{w\in\Wcal}$, leading to the augmented expressivity discussed in Section~\ref{sec:improper}; see Eq.\eqref{eq:mabo_span_bound} in Appendix~\ref{sec:proofs} for a detailed argument. Then with similar strategies, we peel off the approximation error of $\Qcal$ from $|\E_{\mu}[\widehat w \cdot (\Tcal \Qhat - \Qhat)]|$. The rest of the analysis handles statistical errors using generalization error bounds.

\section{Further Comparisons and Discussions}
\label{sec:discuss}
In the previous sections we have analyzed \algsq and \algavg, showing that they enjoy linear-in-horizon error propagation and cleanly and tightly defined concentrability coefficients, which answers \qhor and \qcon in the introduction. Still, \algsq bears significant similarities to classical AVI/API algorithms\footnote{Recall that FQI coincides with \algsq using $\Fcal = \Qcal$ when FQI converges~\citep{chen2019information}, and in this sense \algsq can be viewed as a best-case scenario for FQI.} in the use of squared loss and the expressivity requirement on function approximation (\qfun and \qsq). In this section we compare its guarantee (Theorem~\ref{thm:minimax_FQI}) to that of \algavg (Theorem~\ref{thm:batch_olive_bound}), and discuss the potential advantages of \algavg (which is novel and understudied), as well as its limitations, compared to currently popular algorithms. The recurring theme of the comparisons---as we will see below---is the pros and cons of implicit (e.g., FQI and \algsq) and explicit (\algavg) distribution corrections. 

\subsection[Robustness Against Misspecified Q]{Robustness Against Misspecified $\Qcal$}
We compare the robustness of the two algorithms against misspecified $\Qcal$, that is, how much we pay when $Q^\star \notin \Qcal$. Omitting the common horizon factor, \algsq pays $\Ocal(\sqrt{\Ceff \cdot \epQ})$ and \algavg pays $\Ocal(\epsQ)$. Again, they are not directly comparable, but we can still offer some useful insights. Imagine the scenario of $\Wcal = \{\wpi{\pi_Q} : Q\in\Qcal\}$ (as we did in Section~\ref{sec:CW}), then
\begin{align} 
\label{eq:epQ_epsQ}
&~\epsQ = \min_{Q \in \Qcal} \max_{\pi \in \PiQ}\left|\E_{\mu}[\wpi{\pi} \cdot(\Tcal Q - Q)]\right|  \\
\le &~ \min_{Q \in \Qcal} \max_{\pi \in \PiQ}\sqrt{\E_{\mu}[\wpi{\pi}^2] \cdot \E_{\mu}[(\Tcal Q - Q)^2]} \nonumber 
=  \sqrt{\Ceff \cdot \epQ}.   
\end{align}
Here the second step follows from Cauchy-Schwarz, which we also used in Section~\ref{sec:msbo_sketch}. As we can see, if $\Wcal$ is specified ``just right'', \algavg's guarantee never suffers more than that of \algsq on misspecified $\Qcal$, and any looseness from Cauchy-Schwarz\footnote{See \qsq in the introduction.} enters the gap. On the other hand, such an advantage of \algavg may be weakened if $\Wcal$ includes additional functions that do not correspond to real importance weights.  

Another difference between \algsq and \algavg is that \algsq pays $\sqrt{\Ceff}$ in front of $\sqrt{\epQ}$, whereas \algavg does not pay any concentrability coefficients in its approximation error terms, thanks to explicit distribution correction. While Eq.\eqref{eq:epQ_epsQ} might leave the impression that the difference is superficial,  the inequality only relaxes $\epsQ$ (apart from the nice choice of $\Wcal$) hence unfairly favors \algsq, and there are scenarios where the $\sqrt{\Ceff}$ difference is real: for example, consider the scenario where $Q$ has uniformly low error across all distributions, and $Q'$ has small Bellman error on $\mu$ but (up to $\sqrt{\Ceff}$ times) higher errors on e.g., $d_{\pi_Q'}$. In this case, \algavg clearly prefers $Q$ over $Q'$ due to explicit distribution correction, whereas \algsq is indifferent between them and can suffer the poor performance of $Q'$.

\subsection{Statistical Rates}
The $n^{-\nicefrac{1}{2}}$ terms in Theorems~\ref{thm:minimax_FQI} and \ref{thm:batch_olive_bound} match each other if we treat $\Ceff \approx \CeffW$ (see Section~\ref{sec:CW}). \algavg suffers another $\CinfW/n$ term, whereas $\Cinf$ does not enter the guarantee of \algsq; this is an (unfortunately) inevitable consequence of explicit importance weighting and concentration inequalities. On the other hand, the term fades away quickly with $n$ and will be of minor issue with sufficient data. Finally, \algsq suffers two $n^{-\nicefrac{1}{4}}$ terms, and although they can be absorbed by the worse between the fast rate and the approximation error terms in Big-Oh notations~\citep[Appendix C]{chen2019information}, doing so worsens the constant. 

\subsection{Assumptions on the Helper Classes}
A characteristic shared by \algsq and \algavg is the use of a helper class ($\Fcal$ for \algsq and $\Wcal$ for \algavg) to assist the estimation of the Bellman error. These helper classes also take the heaviest expressivity burdens in their corresponding algorithms: while $\Qcal$ is only required to capture $Q^\star$, $\Fcal$ and $\Wcal$ are required to capture $\Tcal Q$ and $\wpi{\pi_Q}$, respectively, \emph{for all} $Q\in \Qcal$. 

While $\Fcal$ and $\Wcal$ model completely different objects, we note that $\Wcal$ enjoys a superior property that $\Fcal$ does not have, that is we essentially approximate the importance weights from $\lspan(\Wcal)$, allowing simple $\Wcal$ to have high expressivity. This property crucially comes from the linearity of the average Bellman error loss, which is another advantage of the average loss over the squared loss. 

To further illustrate the representation power of $\lspan(\Wcal)$, we provide the following result, showing that in MDPs with low-rank dynamics (which are often sufficient conditions that allow an exploratory\footnote{Technically, a small $\Ceff$ or $\Cinf$.} $\mu$ to exist in the first place~\citep{chen2019information}), there exists very simple (in the sense of low statistical complexity) $\Wcal$ that satisfies $\epsW=0$. 

\begin{proposition}\label{prop:lowrk}
Suppose the rank of the MDP's transition matrix is $k$. Then, 
\begin{enumerate}[leftmargin=*]
\item For any choice of $\Qcal$, there exists $\Wcal$ with cardinality $|\Wcal| \le  (k+1)|\Qcal|$, such that $\epsW = 0$.
\item Let the transition matrix $P = \Phi P'$, where $\Phi \in \RR^{|\Scal\times\Acal|\times k}$ and let $\phi(s,a)^\top$ denote its $(s,a)$-th row. For the choice of $\Qcal = \{(s,a) \mapsto R(s,a) + \gamma \phi(s,a)^\top \theta: \theta \in \RR^k\}$, there exists $\Wcal$ with cardinality  $|\Wcal|\le k+1$ such that $\epsW=0$. 
\end{enumerate} 
\end{proposition}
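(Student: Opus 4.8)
The plan is to reduce everything to the low-rank factorization $P = \Phi P'$ and read off the resulting structure of both $\Tcal Q - Q$ and the importance weights $\wpi{\pi}$. First I would record the backbone identity: for any $Q$, writing $V_Q(s) := \max_a Q(s,a)$ and $\psi_Q := P' V_Q \in \RR^k$, the factorization gives $(Pf)(s,a) = \phi(s,a)^\top (P' f)$ for every state-value $f$, hence $(\Tcal Q)(s,a) = R(s,a) + \gamma\,\phi(s,a)^\top\psi_Q$ and
\[
(\Tcal Q - Q)(s,a) = \big(R(s,a) - Q(s,a)\big) + \gamma\,\phi(s,a)^\top\psi_Q .
\]
Thus for each $Q$ the Bellman residual lives in the (at most) $(k+1)$-dimensional space $\mathrm{span}\{R - Q, \phi_1,\dots,\phi_k\}$, where $\phi_1,\dots,\phi_k$ denote the coordinate functions of $\phi$. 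This single observation drives both parts.

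For Part 2 I would exploit the cancellation special to $\Qcal = \{R + \gamma\,\phi^\top\theta : \theta\in\RR^k\}$: for $Q = R + \gamma\,\phi^\top\theta$ the term $R - Q = -\gamma\,\phi^\top\theta$ itself lies in $\mathrm{col}(\Phi)$, so $\Tcal Q - Q = \gamma\,\phi^\top(\psi_Q - \theta) \in \mathrm{span}\{\phi_1,\dots,\phi_k\}$ \emph{uniformly over} $Q$. Consequently, to kill $\E_\mu[(\wpi{\pi}-w)(\Tcal Q - Q)]$ for all $Q$ at once it suffices to match the $k$ moments $\E_\mu[w\,\phi_j] = \E_{d_\pi}[\phi_j]$ for $j=1,\dots,k$. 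I would then choose $\Wcal$ so that the linear map $M : w \mapsto \E_\mu[w\,\phi]\in\RR^k$ sends $\lspan(\Wcal)$ onto a neighborhood of the origin large enough to contain the (bounded, since the MDP is finite) set $\{\E_{d_\pi}[\phi] : \pi\in\PiQ\}$; suitably rescaled copies of $\phi_1,\dots,\phi_k$ (whose images under $M$ are the columns of the Gram matrix $\E_\mu[\phi\phi^\top]$, invertible when the features are independent and $\mu$ has full support) furnish such a $\Wcal$ of size $\le k+1$, and matching moments yields $\epsW = 0$.

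For Part 1 the residual $R - Q$ no longer cancels, so instead I would establish low-dimensionality of $\wpi{\pi}$ itself. Starting from the Bellman flow equation for the discounted state occupancy, $d_\pi(s') = (1-\gamma)\,d_0(s') + \gamma\sum_{s,a} d_\pi(s,a)\,P(s'|s,a)$, and substituting $P(s'|s,a) = \phi(s,a)^\top P'_{\cdot,s'}$, I obtain $d_\pi(s') = (1-\gamma)\,d_0(s') + \gamma\,\nu_\pi^\top P'_{\cdot,s'}$ with $\nu_\pi := \E_{d_\pi}[\phi]\in\RR^k$. Multiplying by $\pi(a|s)$ gives
\[
d_\pi(s,a) = (1-\gamma)\,d_0(s)\pi(a|s) + \gamma\sum_{j=1}^k (\nu_\pi)_j\,\pi(a|s)\,P'_{j,s},
\]
so $\wpi{\pi} = d_\pi/\mu$ lies in the span of the $k+1$ functions $d_0(s)\pi(a|s)/\mu(s,a)$ and $\{\pi(a|s)P'_{j,s}/\mu(s,a)\}_{j=1}^k$. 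Taking $\Wcal$ to be the union of these $k+1$ functions over all $\pi\in\PiQ$ gives $|\Wcal| \le (k+1)|\Qcal|$ and $\wpi{\pi}\in\mathrm{span}(\Wcal)$ for every $\pi$; choosing $w = \wpi{\pi}$ in the infimum defining $\epsW$ then forces $\epsW = 0$.

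Two points will need care rather than ideas. First, $\lspan(\Wcal)$ admits only unit-$\ell_1$ coefficient vectors, whereas the exact representations above may require coefficients of larger norm; I would absorb this by rescaling each member of $\Wcal$, which is legitimate because the loss $\Lcal_\Dcal(Q,w)$ is scale-free in $w$ and rescaling inflates only $\CeffW,\CinfW$, which do not enter $\epsW$. Second, for Part 2 I must verify the covering argument, namely that a large enough scaling of a basis-spanning $\Wcal$ makes $M(\lspan(\Wcal))$ contain every $\nu_\pi$, which is where the $\ell_1$-ball geometry meets the boundedness of the occupancy moments. The hard part will be the occupancy decomposition in Part 1 and this covering step in Part 2; the remainder is routine linear algebra with the factorization $P = \Phi P'$.
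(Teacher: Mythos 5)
Your proposal is correct, but it reaches both claims by routes genuinely different from the paper's. For Claim 1, the paper proves abstractly that the matrix stacking the state occupancies $\nu_\pi$ has rank at most $k+1$ (via $\nu_{\pi,t}^\top = d_0^\top(\Pi P)^t$), then invokes a determinant-/volume-maximization argument \`a la \citet{chen2019information} to extract $k+1$ occupancy vectors forming an $\ell_1$-bounded spanning set, and finally crosses them with $\Pi_\Qcal$; you instead read an \emph{explicit} basis directly off the discounted flow equation, writing $d_\pi(s,a) = (1-\gamma)\,d_0(s)\pi(a|s) + \gamma\sum_j (\nu_\pi)_j\, \pi(a|s) P'_{j,s}$ with coefficients of uniformly bounded $\ell_1$ norm, which sidesteps the spanner machinery entirely and yields the same cardinality $(k+1)|\Qcal|$. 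For Claim 2, the paper projects through $\Phi^+ = [\Phi~R]$ and again uses determinant maximization to select $k+1$ rescaled true importance weights whose projections span; you exploit the cancellation $\Tcal Q - Q = \gamma\,\phi^\top(\psi_Q-\theta)$ to reduce $\epsW=0$ to matching $k$ feature moments, and take $\Wcal$ to be rescaled coordinates of $\phi$ themselves, relying on invertibility of $\E_\mu[\phi\phi^\top]$ --- which is indeed automatic here, since $\operatorname{rank}(P)=k$ forces $\operatorname{rank}(\Phi)=k$ and $\mu$ is fully supported by the standing assumption, so you should state this as a consequence rather than a hypothesis. Your version even gives $|\Wcal|=k$. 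The trade-off: the paper's spanner argument is generic (it needs only that the relevant vectors live in a low-dimensional space, with no explicit basis or Gram-invertibility condition), whereas your construction is more elementary and constructive; your handling of the unit-$\ell_1$ constraint by uniform rescaling of $\Wcal$ matches what the paper does implicitly and is valid since $\epsW$ is unaffected by the scale of $\CeffW,\CinfW$.
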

The formal definitions and proofs are deferred to Appendix~\ref{sec:lowrk}. In the first claim (general case), $\Wcal$ has low statistical capacity despite scaling with $|\Qcal|$, 
as we need to pay $\ln|\Qcal|$ anyway by using the $\Qcal$ class, and the dependence of $|\Wcal|$ on $|\Qcal|$ is not a significant burden. 
In the second claim, which is the more restricted ``linear MDP'' setting recently studied by e.g., \citet{yang2019sample}, we are able to bring $|\Wcal|$ down to as low as $k+1$; it is also interesting to point out that we cannot guarantee $\wpi{\pi_Q} \in \lspan(\Wcal)$, but using the linear structure of $\Qcal$ we can still prove that $\epsW = 0$. 
Finally, we emphasize that the existence of such a simple $\Wcal$ does not imply that we are guaranteed to find it for every problem, as the design of function approximation always requires appropriate prior knowledge and inductive biases.


\section{Conclusions}
\label{sec:con}
We analyze two algorithms, \algsq and \algavg, which enjoy linear-in-horizon error propagation, a property established for the first time for batch algorithms outputting stationary policies. \algavg uses a novel importance-weight correction to handle the difficulty of Bellman error estimation, and our analyses reveal its distinct properties and potential advantages compared to classical squared-loss-based algorithms.

\section*{Acknowledgement}
The authors thank Aditya Modi for providing the references to some important related works. 

\newpage


\bibliographystyle{plainnat}
\bibliography{paper}



\clearpage

\appendix
\onecolumn

\begin{center}
{\LARGE Appendix}
\end{center}


\section{Detailed Proofs}
\label{sec:proofs}

\begin{replemma}{lem:telscp}[Evaluation error lemma, restated]
For any policy $\pi$ and any $Q\in \RR^{\Scal\times\Acal}$,
\begin{align}
\E_{d_0}[Q(s,\pi)] - J(\pi) = \frac{\E_{d_\pi}\left[Q(s,a) - r - \gamma Q(s',\pi)\right]}{1 - \gamma}.
\end{align}
\end{replemma}

\begin{proof}[Proof of Lemma \ref{lem:telscp}]
Since $J(\pi) = \frac{\E_{d_\pi}[r]}{1-\gamma}$, we remove these terms from both sides, and prove the rest of the identity. 
\begin{align}
&~ \frac{\E_{(s,a,r,s') \sim d_{\pi}}\left[Q(s,a) - \gamma Q(s',\pi(s'))\right]}{1 - \gamma} \\
= &~ \sum_{s,a} \sum_{t = 0}^{\infty} \gamma^t \Pr(s_t = s, a_t = a|s_0 \sim d_0, \pi) Q(s,a) - \sum_{s,a} \sum_{t = 1}^{\infty} \gamma^t \Pr(s_t = s|s_0 \sim d_0, \pi) Q(s,\pi(s)) \\
= &~ \sum_{s,a} \sum_{t = 0}^{\infty} \gamma^t \Pr(s_t = s, a_t = a|s_0 \sim d_0, \pi) Q(s,a) - \sum_{s,a} \sum_{t = 1}^{\infty} \gamma^t \Pr(s_t = s, a_t = a|s_0 \sim d_0, \pi) Q(s,a) \\
= &~ \sum_{s,a} \Pr(s_0 = s, a_0 = a|s_0 \sim d_0, \pi) Q(s,a) = \E_{s \sim d_0}[Q(s,\pi(s))],
\end{align}
where the first equation follows from the definition of $d_{\pi}$, the second equation follows from the definition of $Q(s,\pi(s))$.
\end{proof}

\begin{reptheorem}{thm:telepd}[Telescoping Performance Difference, restated]
For any policy $\pi$ and any $Q\in \RR^{\Scal\times\Acal}$,
\begin{align}
J(\pi) - J(\pi_Q) \leq &~ \frac{\Ebb_{d_{\pi}}\left[\Tcal Q - Q\right]}{1 - \gamma} + \frac{\Ebb_{d_{\pi_Q}}\left[Q - \Tcal Q\right]}{1 - \gamma}.
\end{align}
\end{reptheorem}
\begin{proof}[Proof of Theorem \ref{thm:telepd}]

\begin{align}
J(\pi) - J(\pi_Q) 
= &~ \underbrace{J(\pi) - \E_{s \sim d_0}[Q(s,\pi(s))]}_{\text{(I)}} + \underbrace{\E_{s \sim d_0}[Q(s,\pi(s))] - \E_{s \sim d_0}[Q(s,\pi_Q(s))]}_{\text{(II)}}\\
&~ + \underbrace{\E_{s \sim d_0}[Q(s,\pi_Q(s))] - J(\pi_Q)}_{\text{(III)}}.
\end{align}
These three terms can be bound separately as follows.
\begin{align}
\text{(I)} = &~ J(\pi) - \E_{s \sim d_0}[Q(s,\pi)]\\
= &~ \frac{1}{1 - \gamma} \E_{d_{\pi}}\left[r + \gamma Q(s',\pi) - Q(s,a)\right] \\
\leq &~ \frac{1}{1 - \gamma} \E_{d_{\pi}}\left[r + \gamma \max_{a'\in\Acal} Q(s',a') - Q(s,a)\right] \\
= &~ \frac{1}{1 - \gamma} \E_{d_{\pi}}\left[\Tcal Q - Q\right].
\end{align}
The second equation follows from Lemma \ref{lem:telscp}, and the last step follows from marginalizing out $r$ and $s'$ by conditioning on $(s,a)$ using law of total expectation. 

For (II),
\begin{align}
\text{(II)} = &~ \E_{s \sim d_0}\left[Q(s,\pi(s))\right] - \E_{s \sim d_0}\left[Q(s,\pi_Q(s))\right] = \E_{s \sim d_0}\left[Q(s,\pi(s)) - \max_a Q(s,a) \right] \leq 0.
\end{align}

Finally, (III), which is handled similarly to (I).
\begin{align}
\text{(III)} = &~ \E_{s \sim d_0}[Q(s,\pi_Q)] - J(\pi_Q)\\
= &~ \frac{1}{1 - \gamma}\E_{d_{\pi_Q}}\left[Q(s,a) - r - \gamma Q(s',\pi_Q)\right] \\
= &~ \frac{1}{1 - \gamma}\E_{d_{\pi_Q}}\left[Q(s,a) - r - \gamma \max_{a'}Q(s',a')\right] \\
= &~ \frac{1}{1 - \gamma}\E_{(s,a,s') \sim d_{\pi_Q}}\left[Q - \Tcal Q\right],
\end{align}
where the third equation follows from the definition of $\pi_Q$ being greedy w.r.t.~$Q$. The result follows by putting all three parts together.
\end{proof}

\begin{reptheorem}{thm:minimax_FQI}[Improved error bound of \algsq, restated]
Let $\Qhat$ be the output of \algsq. W.p.~at least $1-\delta$,
\begin{align}
\max_{\pi \in \Pi_{\Qcal}} J(\pi) - J(\pi_{\Qhat}) \leq &~ \frac{2\sqrt{2 \Ceff}}{1 - \gamma} \left( \sqrt{\epQ} + \sqrt{\epQF} \right) \\
&~ + \frac{\sqrt{\Ceff}}{1 - \gamma} \mathcal O \left(\sqrt{\frac{V_{\max}^2 \ln \frac{|\Qcal| |\mathcal F|}{\delta}}{n}} + \sqrt[4]{\frac{V_{\max}^2 \ln \frac{|\Qcal|}{\delta}}{n}\epQ} + \sqrt[4]{\frac{V_{\max}^2 \ln \frac{|\Qcal| |\mathcal F|}{\delta}}{n}\epQF}\right),
\end{align}
where
\begin{align}
\Ceff \coloneqq  &~ \max_{\pi\in\PiQ} \|\wpi{\pi}\|_{2,\mu}^2. \\
\epQ \coloneqq &~ \inf_{Q \in \Qcal} \|Q - \Tcal Q\|_{2,\mu}^2. \\
\epQF \coloneqq &~ \sup_{Q \in \Qcal}\inf_{f \in \mathcal F} \|f - \Tcal Q\|_{2,\mu}^2.
\end{align}
\end{reptheorem}

\begin{proof}[Proof of Theorem \ref{thm:minimax_FQI}]
We use $\pi^\star$ to denote $\argmax_{\pi \in \Pi_{\Qcal}} J(\pi)$. By applying Theorem \ref{thm:telepd}, we can obtain
\begin{align}
\max_{\pi \in \Pi_{\Qcal}} J(\pi) - J(\pi_{\Qhat}) \leq &~~ \frac{\Ebb_{d_{\pi^\star}}\left[\Tcal \Qhat - \Qhat\right]}{1 - \gamma} + \frac{\Ebb_{d_{\pi_{\Qhat}}}\left[\Qhat - \Tcal \Qhat \right]}{1 - \gamma} \\
= &~ \frac{\E_{\mu}\left[w_{\nicefrac{d_{\pi^\star}}{\mu}} \cdot \left(\Tcal \Qhat - \Qhat\right)\right]}{1 - \gamma} + \frac{\E_{\mu}\left[w_{\nicefrac{d_{\pi_{\Qhat}}}{\mu}} \cdot \left(\Qhat - \Tcal \Qhat\right)\right]}{1 - \gamma} \\
\overset{\text{(a)}}{\leq} &~ \frac{\sqrt{\E_{(s,a) \sim \mu}\left[\left(w_{\nicefrac{d_{\pi^\star}}{\mu}}(s,a)\right)^2\right]\E_{(s,a) \sim \mu}\left[\left((\Tcal \Qhat) (s,a) - \Qhat(s,a)\right)^2\right]}}{1 - \gamma} \\
&~ + \frac{\sqrt{\E_{(s,a) \sim \mu}\left[\left(w_{\nicefrac{d_{\pi_{\Qhat}}}{\mu}}(s,a)\right)^2\right]\E_{(s,a) \sim \mu}\left[\left((\Tcal \Qhat) (s,a) - \Qhat(s,a)\right)^2\right]}}{1 - \gamma}\\
\label{eq:bound_J_blm}
\overset{\text{(b)}}{\leq} &~ \frac{2 \sqrt{C_{\text{eff}}}}{1 - \gamma} \left\|Q - \mathcal T Q\right\|_{2,\mu}.
\end{align}
where (a) follows from the Cauchy-Schwarz inequality for random variables ($|\mathbb E X Y| \leq \sqrt{\mathbb E[X^2] \mathbb E [Y^2]}$) and (b) follows from the definition of $\Ceff$.

We then directly adopt the upper bound on $\left\|\Qhat - \mathcal T \Qhat\right\|_{2,\mu}$ from \citet{chen2019information}:
\begin{align}
&~ \left\|\Qhat - \mathcal T \Qhat\right\|_{2,\mu}^2 \leq \frac{16 V_{\max}^2 \ln \frac{2 |\Qcal|}{\delta}}{3n} + 2 \varepsilon_2 + \varepsilon_3 + \sqrt{\frac{8 V_{\max}^2 \ln \frac{2 |\Qcal|}{\delta}}{n} \left( \frac{10 V_{\max}^2 \ln \frac{2 |\Qcal|}{\delta}}{3n} + 2 \varepsilon_2 + \varepsilon_3 \right)}, \\
&~ \qquad \text{where,} \quad \varepsilon_2 = \frac{43 V_{\max}^2 \ln \frac{8|\Qcal||\mathcal F|}{\delta}}{n} + \sqrt{\frac{239 V_{\max}^2 \ln \frac{8|\Qcal||\mathcal F|}{\delta}}{n} \epQF} + \epQF,\\
\label{eq:bound_minimax_blm}
&~ \qquad \text{and,} \quad \varepsilon_3 = \epQ + \sqrt{\frac{8 V_{\max}^2 \ln \frac{2 |\mathcal F|}{\delta}}{n} \epQ} + \frac{4 V_{\max}^2 \ln \frac{2|\Qcal|}{\delta}}{3n}.
\end{align}

By substitute Eq.\eqref{eq:bound_J_blm} into Eq.\eqref{eq:bound_minimax_blm} and adapt the the proof of Theorem 17 in \citet{chen2019information}, we have
\begin{align}
\max_{\pi \in \Pi_{\Qcal}} J(\pi) - J(\pi_{\Qhat}) \leq &~ \frac{2 \sqrt{C_{\text{eff}}}}{1 - \gamma} \left\|\Qhat - \mathcal T \Qhat\right\|_{2,\mu} \\
\leq &~ \frac{2 \sqrt{C_{\text{eff}}}}{1 - \gamma} \left( \sqrt{2 \epQ} + \sqrt{2 \epQF} \right) + \frac{2 \sqrt{C_{\text{eff}}}}{1 - \gamma} \left( \sqrt{\frac{24 V_{\max}^2 \ln \frac{2|\Qcal|}{\delta}}{n}} + \sqrt{\frac{172 V_{\max}^2 \ln \frac{8|\Qcal| |\mathcal F|}{\delta}}{n}} \right) \\
&~ + \frac{2 \sqrt{C_{\text{eff}}}}{1 - \gamma} \left( \sqrt[4]{\frac{32 V_{\max}^2 \ln \frac{2|\Qcal|}{\delta}}{n}\epQ} + \sqrt[4]{\frac{3824 V_{\max}^2 \ln \frac{8|\Qcal| |\mathcal F|}{\delta}}{n}\epQF} \right).  \tag*{\qedhere}
\end{align}
\end{proof}

\begin{reptheorem}{thm:batch_olive_bound}[Error bound of MABO, restated]
Let $\Qhat$ be the output of \algavg. W.p.~$1 - \delta$, 
\begin{align}
\max_{\pi \in \Pi_{\Qcal}} J(\pi) - J(\pi_{\Qhat}) \leq \frac{2}{1 - \gamma}\left(\epsQ + \epsW + \epsstat \right).
\end{align}
where
\begin{align}
\epsQ \coloneqq &~ \min_{Q \in \Qcal} \max_{w \in \Wcal}\left|\E_{\mu}[w\cdot(\Tcal Q - Q)]\right|, \\
\epsW \coloneqq &~ \max_{\pi \in \Pi_{\Qcal}} \inf_{\substack{w \in \lspan(\Wcal)}} \max_{Q \in \Qcal} \bigg|\E_{\mu} \big[(\wpi{\pi} - w) \cdot(\Tcal Q - Q)\big]\bigg|, \\ 
\epsstat \coloneqq &~ 2 \textstyle V_{\max} \sqrt{\frac{2 \CeffW \ln\frac{2|\Qcal||\Wcal|}{\delta}}{n}} + \frac{4 \CinfW V_{\max} \ln\frac{2|\Qcal||\Wcal|}{\delta}}{3n},\\
\CeffW \coloneqq &~ \max_{w\in\Wcal} \|w\|_{2, \mu}^2,  \quad
\CinfW \coloneqq \max_{w\in\Wcal} \|w\|_\infty,
\end{align}
and 
$\lspan(\Wcal)$ is the linear span of $\Wcal$ using coefficients with (at most) unit $\ell_1$ norm, i.e., 
$$ \textstyle 
\lspan(\Wcal) \coloneqq \left\{\sum_{w \in \Wcal} \alpha(w) w : \sum_{w\in\Wcal}|\alpha(w)| \le 1\right\}.$$ 
\end{reptheorem}

\begin{proof}[Proof of Theorem \ref{thm:batch_olive_bound}]
Let $\pistar \coloneqq \argmax_{\pi \in \Pi_{\Qcal}} J(\pi)$. By Theorem \ref{thm:telepd}, we have
\begin{align}
\max_{\pi \in \Pi_{\Qcal}} J(\pi) - J(\pi_{\Qhat}) \leq &~ \frac{\E_{d_{\pistar}}\left[\Tcal \Qhat - \Qhat \right]}{1 - \gamma} + \frac{\E_{d_{\pi_{\Qhat}}}\left[\Qhat - \Tcal \Qhat \right]}{1 - \gamma}\\
\leq &~ \frac{2\max_{\pi \in \Pi_{\Qcal}} \left|\mathcal L_{\mu} (\Qhat,w_{\nicefrac{d_\pi}{\mu}})\right|}{1 - \gamma}.
\end{align}
We now bound $\left|\mathcal L_{\mu} (\Qhat,w_{\nicefrac{d_\pi}{\mu}})\right|$ for any policy $\pi \in \PiQ$. 
%
Let 
\begin{align}
\widehat w_{\nicefrac{d_\pi}{\mu}} \coloneqq \argmin_{w \in \lspan(\Wcal)}\max_{Q \in \Qcal}\left|\E_{\mu} \left[ \left(w_{\nicefrac{d_\pi}{\mu}} - w\right) \cdot \left(\Tcal \Qhat - \Qhat \right)\right]\right|,
\end{align}
and we obtain
\begin{align}
\left|\mathcal L_{\mu} (\Qhat,w_{\nicefrac{d_\pi}{\mu}})\right| = &~ \left|\E_{\mu} \left[ \left(w_{\nicefrac{d_\pi}{\mu}} - \widehat w_{\nicefrac{d_\pi}{\mu}}\right) \cdot \left(\Tcal \Qhat - \Qhat \right)\right] + \E_{\mu} \left[ \widehat w_{\nicefrac{d_\pi}{\mu}} \cdot \left(\Tcal \Qhat - \Qhat \right)\right]\right| \\
\leq &~ \left|\E_{\mu} \left[ \left(w_{\nicefrac{d_\pi}{\mu}} - \widehat w_{\nicefrac{d_\pi}{\mu}}\right) \cdot \left(\Tcal \Qhat - \Qhat \right)\right]\right|  + \left|\E_{\mu} \left[ \widehat w_{\nicefrac{d_\pi}{\mu}} \cdot \left(\Tcal \Qhat - \Qhat \right)\right]\right| \\
= &~ \epsW + \left|\E_{\mu} \left[ \widehat w_{\nicefrac{d_\pi}{\mu}} \cdot \left(\Tcal \Qhat - \Qhat \right)\right]\right|,
\end{align}
where the last equation follows from the definition of $\epsW$.

To bound the remaining term, we first need a helper lemma that $\sup_{w\in\lspan(\Wcal)} |f(\cdot)| = \max_{w\in\Wcal} |f(\cdot)|$ for any linear function $f(\cdot)$: consider any $w \in \lspan(\Wcal)$, which can be written as $w = \sum_i \alpha_i w_i$, where $w_i \in \Wcal, \forall i$ and $\sum_i |\alpha_i| \leq 1$. For linear $f(\cdot)$ and any $w \in \lspan(\Wcal)$ we have
\begin{align}
\label{eq:mabo_span_bound}
\left| f(w) \right| = &~ \left| f\left(\sum_i \alpha_i w_i\right) \right| = \left| \sum_i \alpha_i f(w_i) \right| \leq \sum_i |\alpha_i| \left| f(w_i) \right| \leq \sup_{w' \in \Wcal} |f(w')|.
\end{align}
So $\sup_{w\in\lspan(\Wcal)} |f(\cdot)| \le \max_{w\in\Wcal} |f(\cdot)|$. On the other hand, since $\Wcal \subset \lspan(\Wcal)$, we conclude that $\sup_{w\in\lspan(\Wcal)} |f(\cdot)| = \max_{w\in\Wcal} |f(\cdot)|$ for linear $f(\cdot)$.

With this preparation, now we are ready to bound $\left|\E_{\mu} \left[ \widehat w_{\nicefrac{d_\pi}{\mu}} \cdot \left(\Tcal \Qhat - \Qhat\right)\right]\right|$. Note that 
$$
\epsQ \coloneqq \min_{Q \in \Qcal} \max_{w \in \Wcal}\left|\E_{\mu}[w\cdot(\Tcal Q - Q)]\right| = \min_{Q \in \Qcal} \sup_{w \in \lspan(\Wcal)}\left|\E_{\mu}[w\cdot(\Tcal Q - Q)]\right|,
$$
so we have
\begin{align}
\left|\E_{\mu} \left[ \widehat w_{\nicefrac{d_\pi}{\mu}} \cdot \left(\Tcal \Qhat - \Qhat \right)\right]\right| 
= \left|\E_{\mu} \left[ \widehat w_{\nicefrac{d_\pi}{\mu}} \cdot \left(\Tcal \Qhat - \Qhat \right)\right]\right| - \min_{Q \in \Qcal}\sup_{w \in \lspan(\Wcal)}\left|\E_{\mu} \left[ w \cdot \left(\Tcal Q - Q \right)\right]\right| + \epsQ,
\end{align}
At this point, we peeled off all the approximation errors from $\left|\mathcal L_{\mu} (\Qhat,w_{\nicefrac{d_\pi}{\mu}})\right|$, and it remains to bound the estimation error
\begin{align}
\left|\E_{\mu} \left[ \widehat w_{\nicefrac{d_\pi}{\mu}} \cdot \left(\Tcal \Qhat - \Qhat \right)\right]\right| - \inf_{Q \in \Qcal}\sup_{w \in \lspan(\Wcal)}\left|\E_{\mu} \left[ w \cdot \left(\Tcal Q - Q \right)\right]\right|.
\end{align}
Let $\Qtilde \coloneqq \argmin_{Q \in \Qcal}\sup_{w \in \lspan(\Wcal)}\left|\E_{\mu} \left[ w \cdot \left(\Tcal Q - Q \right)\right]\right|$ and $\Wcal_{1} \coloneqq \{a w: a \in [-1, 1], w \in \Wcal \}$.
\begin{align}
&~ \left|\E_{\mu} \left[ \widehat w_{\nicefrac{d_\pi}{\mu}} \cdot \left(\Tcal \Qhat - \Qhat \right)\right]\right| - \inf_{Q \in \Qcal}\sup_{w \in \lspan(\Wcal)}\left|\E_{\mu} \left[ w \cdot \left(\Tcal Q - Q \right)\right]\right| \\
\leq &~ \sup_{w \in \lspan(\Wcal)} \left|\E_{\mu} \left[ w \cdot \left(\Tcal \Qhat - \Qhat \right)\right]\right| - \sup_{w \in \lspan(\Wcal)}\left|\E_{\mu} \left[ w \cdot \left(\Tcal \Qtilde - \Qtilde \right)\right]\right| \\
= &~ \sup_{w \in \lspan(\Wcal)} \left|\E_{\mu} \left[ w \cdot \left(\Tcal \Qhat - \Qhat \right)\right]\right| - \sup_{w \in \lspan(\Wcal)}\left|\E_{(s,a,r,s') \sim \mathcal D} \left[ w(s,a) \left(r + \max_{a'} \Qhat(s',a') - \Qhat(s,a) \right)\right]\right| \\
&~ + \sup_{w \in \lspan(\Wcal)} \left|\E_{(s,a,r,s') \sim \mathcal D} \left[ w(s,a) \left(r + \max_{a'} \Qhat(s',a') - \Qhat(s,a) \right)\right]\right| - \sup_{w \in \lspan(\Wcal)}\left|\E_{\mu} \left[ w \cdot \left(\Tcal \Qtilde - \Qtilde \right)\right]\right| \\
\overset{\text{(a)}}{\leq} &~ \sup_{w \in \lspan(\Wcal)} \left|\E_{\mu} \left[ w \cdot \left(\Tcal \Qhat - \Qhat \right)\right] - \E_{(s,a,r,s') \sim \mathcal D} \left[ w(s,a) \left(r + \max_{a'} \Qhat(s',a') - \Qhat(s,a) \right)\right]\right| \\
&~ + \sup_{w \in \lspan(\Wcal)} \left|\E_{(s,a,r,s') \sim \mathcal D} \left[ w(s,a) \left(r + \max_{a'} \Qhat(s',a') - \Qhat(s,a) \right)\right]\right| - \sup_{w \in \lspan(\Wcal)}\left|\E_{\mu} \left[ w \cdot \left(\Tcal \Qtilde - \Qtilde \right)\right]\right| \\
\label{eq:mabo_2terms}
\overset{\text{(b)}}{\leq} &~ \underbrace{\sup_{w \in \Wcal} \left|\E_{\mu} \left[ w \cdot \left(\Tcal \Qhat - \Qhat \right)\right] - \E_{(s,a,r,s') \sim \mathcal D} \left[ w(s,a) \left(r + \max_{a'} \Qhat(s',a') - \Qhat(s,a) \right)\right]\right|}_{\text{(I)}} \\
&~ + \underbrace{\sup_{w \in \Wcal} \left|\E_{(s,a,r,s') \sim \mathcal D} \left[ w(s,a) \left(r + \max_{a'} \Qtilde(s',a') - \Qtilde(s,a) \right)\right] - \E_{\mu} \left[ w(s,a) \left(\Tcal \Qtilde - \Qtilde \right)\right]\right|}_{\text{(II)}}.
\end{align}
where (a) follows form $\sup_x|f(x)| - \sup_x|g(x)| \leq \sup_x |f(x) - g(x)|$ and (b) follows from Eq.\eqref{eq:mabo_span_bound} and the following argument:
\begin{align}
&~ \sup_{w \in \lspan(\Wcal)} \left|\E_{(s,a,r,s') \sim \mathcal D} \left[ w(s,a) \left(r + \max_{a'} \Qhat(s',a') - \Qhat(s,a) \right)\right]\right| - \sup_{w \in \lspan(\Wcal)}\left|\E_{\mu} \left[ w \cdot \left(\Tcal \Qtilde - \Qtilde \right)\right]\right| \\
\leq &~ \sup_{w \in \Wcal} \left|\E_{(s,a,r,s') \sim \mathcal D} \left[ w(s,a) \left(r + \max_{a'} \Qhat(s',a') - \Qhat(s,a) \right)\right]\right| - \sup_{w \in \Wcal_{1}}\left|\E_{\mu} \left[ w \cdot \left(\Tcal \Qtilde - \Qtilde \right)\right]\right| \\
\leq &~ \sup_{w \in \Wcal} \left|\E_{(s,a,r,s') \sim \mathcal D} \left[ w(s,a) \left(r + \max_{a'} \Qhat(s',a') - \Qhat(s,a) \right)\right]\right| - \sup_{w \in \Wcal}\left|\E_{\mu} \left[ w \cdot \left(\Tcal \Qtilde - \Qtilde \right)\right]\right| \\
\leq &~ \sup_{w \in \Wcal} \left|\E_{(s,a,r,s') \sim \mathcal D} \left[ w(s,a) \left(r + \max_{a'} \Qtilde(s',a') - \Qtilde(s,a) \right)\right]\right| - \sup_{w \in \Wcal}\left|\E_{\mu} \left[ w \cdot \left(\Tcal \Qtilde - \Qtilde \right)\right]\right| \\
\leq &~ \sup_{w \in \Wcal} \left|\E_{(s,a,r,s') \sim \mathcal D} \left[ w(s,a) \left(r + \max_{a'} \Qtilde(s',a') - \Qtilde(s,a) \right)\right] - \E_{\mu} \left[ w \cdot \left(\Tcal \Qtilde - \Qtilde \right)\right]\right|,
\end{align}
where the first inequality follows from Eq.\eqref{eq:mabo_span_bound} and the fact that $\Wcal_1 \subseteq \lspan(\Wcal)$, the second inequality follows from the linearity of $\E_{\mu} \left[ w \cdot \left(\Tcal \Qtilde - \Qtilde \right)\right]$, the third inequality follows from the fact that $\Qhat$ optimizes $\max_{w\in\Wcal} |\Lcal_\Dcal(\cdot, w)|$, and the last inequality follows from $\sup_x|f(x)| - \sup_x|g(x)| \leq \sup_x |f(x) - g(x)|$.

Now, since the only difference between term (I) and term (II) is the choice of $Q$ and $w$, it suffices to provide a uniform deviation bound that applies to all $w\in\Wcal$ and $Q\in\Qcal$. Before applying concentration bounds, it will be useful to first verify the boundedness of the random variables: $w(s,a) \in [-C, C]$, and $r + \gamma \max_{a'} Q(s', a') - Q(s,a) \in [-\Vmax, \Vmax]$ (recall that we assumed $Q\in[0, \Vmax]$). Therefore, by Bernstein's inequality and the union bound, w.p.~at least $1 - \delta$ we have that for any $w\in\Wcal$ and $Q\in\Qcal$,
\begin{align}
&~ \left| \E_\mu \left[w \cdot \left(\Tcal Q - Q \right) \right] - \frac{1}{n}\sum_{i = 1}^n \left[w(s_i,a_i) \left(r_i + \gamma \max_{a'}Q(s_i',a') - Q(s_i,a_i) \right) \right] \right| \\
\leq &~ \sqrt{\frac{2 \Var_{\mu} \left[ w(s,a) \left(r + \gamma \max_{a'}Q(s',a') - Q(s,a) \right) \right] \ln\frac{2|\Qcal||\Wcal|}{\delta}}{n}} + \frac{2 \CinfW V_{\max} \ln\frac{2|\Qcal||\Wcal|}{\delta}}{3n} \\
\label{eq:mabo_stat_n}
\overset{\text{(a)}}{\leq} &~ V_{\max} \sqrt{\frac{2 \CeffW \ln\frac{2|\Qcal||\Wcal|}{\delta}}{n}} + \frac{2 \CinfW V_{\max} \ln\frac{2|\Qcal||\Wcal|}{\delta}}{3n} = \frac{\epsstat}{2},
\end{align}
where (a) is obtained by the following argument:
\begin{align}
&~ \Var_{\mu} \left[ w(s,a) \left(r + \gamma \max_{a'}Q(s',a') - Q(s,a) \right) \right] \\
\leq &~ \E_{\mu} \left[ w(s,a)^2 \left(r + \gamma \max_{a'}Q(s',a') - Q(s,a) \right)^2 \right] \\
\leq &~ V_{\max}^2 \E_{\mu} \left[ w(s,a)^2 \right] 
\leq V_{\max}^2 \CeffW. 
\end{align}

Substituting Eq.\eqref{eq:mabo_stat_n} into Eq.\eqref{eq:mabo_2terms}, we obtain that the both of term (I) and term (II) in Eq.\eqref{eq:mabo_2terms} can be simultaneously bounded by $\epsstat/2$ w.p.~$1 - \delta$ . 
Therefore, we bound $\max_{\pi \in \Pi_{\Qcal}} J(\pi) - J(\pi_{\Qhat})$ w.p.~$1 - \delta$ as follows
\begin{align}
\max_{\pi \in \Pi_{\Qcal}} J(\pi) - J(\pi_{\Qhat}) \leq \frac{2}{1 - \gamma}\left( \epsQ + \epsW + \epsstat \right). \tag*{\qedhere}
\end{align}
\end{proof}

\section{Comparison between Per-step vs.~Occupancy-based Concentrability Coefficients} \label{sec:per_step_con}

We provide an example to illustrate the limitation of the per-step concentrability coefficients (Proposition~\ref{prop:per_step_con}). Consider a deterministic chain MDP, where there are $L + 1$ states, $\{s_0, s_1, s_2, \dotsc, s_L\}$. There is only one action, which we omit in the notations. $s_0$ is the deterministic initial state, and each $s_l$  transitions to $s_{l+1}$ under the only action for $0\le l <L$.  $s_L$ is an absorbing state (i.e., it transitions to itself). The reward function is inconsequential.

There is only one possible policy $\pi$ for this MDP, and we let the data distribution $\mu = d_\pi$. The occupancy-based concentrability coefficient is always $1$ (either $C_\infty$ or $C_{\text{eff}}$), which agrees with the intuition that there is no distribution shift. Since the per-step definitions (Eq.\eqref{eq:Cps}) are always the convex combinations of $C_t = \max_{\pi} \|\wpi{\pi,t}\|_\infty$ for  $t \ge 0$, we can assert that it is never lower than $\min_t C_t$ however the combination coefficients are chosen. 

Now we calculate $C_t$ for this MDP:
\begin{align*}
C_t = \begin{cases}
\frac{1}{\mu(s_t)} = \frac{1}{(1 - \gamma) \gamma^t}, ~~ 0 \le t < L \\
\frac{1}{\mu(s_L)} = \frac{1}{\gamma^L}, ~~ t \ge L
\end{cases}
\end{align*}
Replacing $\|\cdot\|_\infty$ with $\|\cdot\|_{2,\mu}^2$ gives exactly the same results. (When the distribution on the enumerator is a point mass, $\|\cdot\|_{2,\mu}^2$ of the importance weight is the same as $\|\cdot\|_\infty$.) Therefore, as long as $L$ is sufficiently large so that $\frac{1}{\gamma^L} \ge \frac{1}{(1 - \gamma)}$, we have $C_t \ge \nicefrac{1}{1 - \gamma}$ for all $t$, and the per-step concentrability coefficient is at least $\nicefrac{1}{1 - \gamma}$. 
As a final remark, since the MDP only has 1 policy, the result has no dependence on the choice of policy class in $\max_\pi$ in the definition of concentrability coefficient, so we have virtually covered all existing definitions in the AVI/API literature. 

\section{On Iterative Methods' Lack of Control of Bellman Errors} \label{sec:iter_lack_berr}

We demonstrate that iterative methods fail to directly control  the Bellman error on the data distribution $\mu$. Consider a two-state deterministic MDP with just 1 action, where $s_1$ transitions to $s_2$, and $s_2$ is absorbing. The reward is always $0$. 
%

We use the tabular representation for this MDP, where $Q = [Q(s_1,a), Q(s_2,a)]^\top$. Assume our batch data $\mathcal D$ only contains transition tuples of form $(s_1,a,0,s_2)$, and no data points from $(s_2, a_2)$ are present. We first show how FQI behave on this example. Given the update rule of FQI (Eq.\eqref{eq:fqi_update_rule}), 
\begin{align}
Q_t \in \argmin_{Q} \ell_{\mathcal D}(Q;Q_{t - 1}) = \{[Q(s_1,a), Q(s_2,a)]^\top: Q(s_1,a) = \gamma Q_{t - 1}(s_2,a)\}.
\end{align}
Therefore, with very update, $Q(s_1,a)$ will obtain the old value of $\gamma Q(s_2,a)$ from the previous iteration, whereas the new value of $Q(s_2,a)$ will be set arbitrarily. Since the mean square Bellman error is $\|Q_t - \mathcal T Q_t\|_{2,\mu}^2 = (Q_t(s_1,a) - \gamma Q_t(s_2,a))^2$, its value can be arbitrarily away from $0$ and do not become smaller over iterations. In comparison, it is easy to verify that \algsq and \algavg do not suffer from this issue: although there is also arbitrariness in their outputs due to insufficient data coverage, their outputs will always satisfy $Q(s_1,a) = \gamma Q(s_2,a)$ and hence imply zero Bellman error on $\mu$.

As a final remark, it should be noted that the counterexample holds because $\mu$ is non-exploratory and $\Ceff = \Cinf = \infty$, which breaks the assumption for all algorithms considered in this paper. Although $\|Q- \Tcal Q\|_{\mu,2}^2$ will be controlled by FQI when $\mu$ is exploratory, this is an indirect consequence of FQI finding $Q\approx Q^\star$, and our example illustrates that these iterative methods do not \emph{directly} control the Bellman error on the data distribution. 

\section[Existence of Simple W in Low-rank MDPs (Proposition~\ref{prop:lowrk})]{Existence of Simple $\Wcal$ in Low-rank MDPs (Proposition~\ref{prop:lowrk})}
\label{sec:lowrk}

\paragraph{Claim 1: General Low-rank Case} 
Consider an MDP whose  transition matrix $P \in \RR^{|\Scal\times\Acal|\times|\Scal|}$ satisfies $\text{rank}(P) = k$. 
Let there be a total of $N$ policies in $\Pi_{\Qcal}$, and we stack $\nu_\pi \in \RR^{|\Scal|}$ for all $\pi \in \Pi_{\Qcal}$ as a matrix: 
$
M_{\nu} \coloneqq 
\begin{bmatrix}
\nu_{\pi_1} & 
\cdots &
\nu_{\pi_N}
\end{bmatrix}^\top
$; all vectors in this proof are treated as column vectors. 
We first argue that $\text{rank}(M_\nu) \le k+1$.

Let $\nu_{\pi,t}(s)$ be the marginal distribution of $s_t$ under $\pi$. Also let $\Pi \in \RR^{|\Scal| \times |\Scal\Acal|}$ be the standard matrix representation of a policy $\pi$, that is, $\Pi_{s', (s,a)} \coloneqq \mathds 1(s=s', a = \pi(s))$. It is known that $\nu_{\pi,t}^\top = d_0^\top (\Pi P)^t$, which shows that $\nu_{\pi,t}^\top$ is in the row-space of $\left[\begin{smallmatrix} P \\ d_0^\top \end{smallmatrix}\right]$ for any $\pi$ and $t$. Since $\nu_\pi = (1-\gamma) \sum_{t=0}^\infty \gamma^t \nu_{\pi,t}$, the same holds for $\nu_\pi$. Therefore, we have $\text{rank}(M_{\nu}) \leq \text{rank}(\left[\begin{smallmatrix} P \\ d_0^\top \end{smallmatrix}\right]) \le k+1$. For convenience, let $k' \coloneqq k+1$.

Then, following a determinant(volume)-maximization argument similar to~\citet[Proposition 10]{chen2019information}, we can find $k'$ rows from $M_{\nu}$, denoted as $\eta_1, \dotsc, \eta_{k'}$, which satisfies the following: for any $i = 1, \dotsc, N$, there exists $\alpha_1, \dotsc, \alpha_{k'}$, such that $\nu_{\pi_i} = \sum_{j = 1}^{k'} \alpha_j \cdot k' \cdot \eta_j$, and $|\alpha_j| \leq \nicefrac{1}{k'}$ for $j = 1, \dotsc, k'$. This implies that $\{\nu_{\pi_1}, \dotsc, \nu_{\pi_N}\} \subseteq \lspan(\{\eta_1', \dotsc, \eta_{k'}'\})$, where $\eta_i' \coloneqq k'\eta_i$. Now consider $\lspan(\{\eta_1', \dotsc, \eta_{k'}'\} \times \PiQ$, where the Cartesian product  produces $k'|\PiQ|$ pairs of state-action functions, defined as $\eta' \times \pi \coloneqq ( (s,a) \mapsto \eta'(s) \cdot \mathds{1}(a=\pi(s)))$. We claim that $\{d_{\pi_1}, \ldots, d_{\pi_N}\} \subset \lspan(\{\eta_1', \dotsc, \eta_{k'}'\} \times \PiQ$: for any $\pi_i$, since $\nu_{\pi_i}$ can be expressed as the linear combination of $\{\eta_1', \ldots, \eta_{k'}'\}$ with coefficients satisfying the norm constraints, $d_{\pi_i} = \nu_{\pi_i} \times \pi_i$ is also the combination of $\{\eta_1' \times \pi_i, \ldots, \eta_{k'}' \times \pi_i\}$ with exactly the same coefficients.

Since $\mu$ is supported on the entire $\Scal\times\Acal$, we have $\wpi{\pi} = \text{diag}(\mu)^{-1} d_\pi$. Putting all results together, it suffices to choose $\Wcal = \{\text{diag}(\mu)^{-1} (\eta_i' \times \pi_Q) : i \in [k'], Q \in \Qcal\}$, and $|\Wcal| \le (k+1)|\PiQ|$.


\paragraph{Remark on the $|\Qcal|$ Dependence in the General Case}
The annoying dependence on $|\Qcal|$ comes from the fact that we hope the \emph{state-action occupancy} vectors of different policies to have low-rank factorization (which is satisfied in the more restricted case; see Claim 2). In general low-rank MDPs, however, only state occupancy factorizes and the state-action one does not; a counter-example can be easily shown in contextual bandits:

Consider an MDP with 2 actions per state. $d_0$ is uniform among $|\Scal|-1$ states, all of which transition deterministically to the last state, which is absorbing. This MDP essentially emulates a contextual bandit. Since all states share exactly the same next-state distribution, the rank of the transition matrix is $1$ regardless of how large $|\Scal|$ is. 
Now consider a policy space $\PiQ$, where each policy takes action $a_1$ in one of the $|\Scal|-1$ states, and takes $a_2$ in all other states; there are $|\Scal|-1$ such policies. It is easy to show that the matrix consisting of state-action occupancy $d_\pi$ for all policies in $\PiQ$ has full-rank $|\Scal|-1$, which cannot be bounded by the rank of the transition matrix when $|\Scal|$ is large. 

Given this difficulty, our strategy is to first find the policies whose state occupancy vectors span the entire low-dimensional space, and take their Cartesian product with $\PiQ$ to handle the actions, which results in the $|\Qcal|$ dependence. As we will see below, we can avoid paying $|\Qcal|$ when the $\Qcal$ class is more structured.

\paragraph{Claim 2: Restricted Case of Knowing the Left Factorization Matrix as Features~\citep{yang2019sample}}
Here we consider the setting of $P = \Phi P'$, where $\Phi \in \RR^{|\Scal\times\Acal|\times k}$ and $\phi(s,a)^\top$ denotes its $(s,a)$-th row. For the choice of $\Qcal = \{(s,a) \mapsto R(s,a) + \gamma \phi(s,a)^\top \theta: \theta \in \RR^k\}$, note that any $Q\in \Qcal$ is in the column space of $\Phi^+ \coloneqq [\Phi~ R]$, where the reward function $R$ is treated as an $|\Scal\times\Acal|\times 1$ vector. \citet[Proposition 2]{yang2019sample} shows that it is realizable and closed under Bellman update, i.e., $\Tcal Q \in \Qcal, \forall Q\in\Qcal$. Therefore, the Bellman error $Q - \Tcal Q$ is also in the column space of $\Phi^+$. Let $\phi^+(s,a)^\top$ be the $(s,a)$-th row of $\Phi^+$, and $\theta_Q^+$ and $\theta_{\Tcal Q}^+$ be the coefficients such that $Q = \phi^+(s,a)^\top \theta_Q^+$ and $\Tcal Q = \phi^+(s,a)^\top \theta_{\Tcal Q}^+$.

Fixing any $\pi$, consider
\begin{align}
&~ \E_{\mu}[(w-\wpi{\pi})\cdot(\Tcal Q - Q)] \\
= &~ \E_{\mu}[(w-\wpi{\pi})\cdot (\phi^+(s,a)^\top (\theta_Q^+ -\theta_{\Tcal Q}^+))]\\
= &~ (w-\wpi{\pi})^\top \text{diag}(\mu) \Phi^+ (\theta_Q^+ -\theta_{\Tcal Q}^+).
\end{align}
According to the definition of $\epsW$, to achieve $\epsW=0$ it suffices to have the following: for every $\pi \in \PiQ$, there exists $w\in \lspan(\Wcal)$, such that $\E_{\mu}[(w-\wpi{\pi})\cdot (Q-\Tcal Q)] = 0$ for any $Q\in\Qcal$. Given the linear structure of $Q$ and $\Tcal Q$, we can relax the last statement to its sufficient condition: 
$$
(w - \wpi{\pi})^\top \text{diag}(\mu) \Phi^+ = \mathbf{0}_{k+1}^\top,
$$
where $\mathbf{0}$ is the all-zero vector. 
The rest of the proof is very similar to Claim 1: we simply stack $\wpi{\pi}^\top \text{diag}(\mu) \Phi^+ \in \RR^{1\times (k+1)}$ together into a $|\PiQ|\times(k+1)$ matrix, use the determinant-maximization argument to select its rows, and form $\Wcal$ with the corresponding $\wpi{\pi}$ after proper rescaling.

\paragraph{Remark} Since $\Qcal$ is closed under Bellman update in this setting, one may also use $\Qcal$ as the helper class $\Fcal$ for \algsq. However, the complexity of $\Fcal$ in this case only matches that of $\Wcal$ in the more general case (Claim 1) and is significant worse than what we can achieve here ($|\Wcal| \le k+1$). 

\end{document}